\def\eqref#1{equation~\ref{#1}}
\def\1{\bm{1}}
\DeclareMathAlphabet{\mathsfit}{\encodingdefault}{\sfdefault}{m}{sl}
\SetMathAlphabet{\mathsfit}{bold}{\encodingdefault}{\sfdefault}{bx}{n}
\newtheorem{definition}{Definition}[section]
\newtheorem{assumption}{Assumption}[section]
\newtheorem{problem}{Problem}[section]
\newtheorem{proposition}{Proposition}[section]
\title{Fidelity-Aware Data Composition for Robust Robot Generalization}
\author{Zizhao Tong\textsuperscript{1,5}\footnotemark[1] \quad
    Di Chen\textsuperscript{3}\footnotemark[1] \quad
    Sicheng Hu\textsuperscript{5}\footnotemark[1] \quad 
    Hongwei Fan\textsuperscript{2,5} \quad 
    Liliang Chen\textsuperscript{3} \\
    \textbf{Guanghui Ren\textsuperscript{3}} \quad
    \textbf{Hao Tang\textsuperscript{4}} \quad
    \textbf{Hao Dong\textsuperscript{2,5}\footnotemark[2]} \quad
    \textbf{Ling Shao\textsuperscript{1}\footnotemark[2]}
    \\
    \textsuperscript{1}UCAS-Terminus AI Lab, University of Chinese Academy of Sciences \quad
    \textsuperscript{2}CFCS, \\ School of Computer Science, Peking University \quad
    \textsuperscript{3}Agibot \quad
    \textsuperscript{4}State Key Laboratory \\ of Multimedia Information Processing, School of Computer Science, Peking University \\
    \textsuperscript{5}PKU-Agibot Lab
}
\begin{document}

\maketitle

\renewcommand{\thefootnote}{\fnsymbol{footnote}}
\footnotetext[1]{Indicates equal contribution.}
\footnotetext[2]{Correspondence to: Hao Dong (\texttt{hao.dong@pku.edu.cn}) and Ling Shao (\texttt{ling.shao@ieee.org}).}
\renewcommand{\thefootnote}{\arabic{footnote}}

\begin{abstract}
Generalist robot policies trained on large-scale, visually homogeneous datasets can be susceptible to shortcut learning, which impairs their out-of-distribution (OOD) generalization. While generative data augmentation is a common approach to introduce diversity, it presents a subtle challenge: data composition. Naively mixing real and synthetic data can corrupt the learning signal, as this process often prioritizes visual diversity at the expense of information fidelity. This paper suggests that robust generalization depends on principled, fidelity-aware data composition. We introduce Coherent Information Fidelity Tuning (CIFT), a framework that treats data composition as an optimization problem. CIFT uses a practical proxy for Information Fidelity based on the feature-space geometry of a dataset. This enables the identification of a phase transition, termed the Decoherence Point, where training stability degrades. The framework includes a generative engine, Multi-View Video Augmentation (MVAug), to synthesize a causally disentangled data spectrum for this tuning process. Applying CIFT to policy architectures such as $\pi_0$ and Diffusion Policy improves OOD success rates by over 54\%. These results indicate that fidelity-aware composition, beyond data synthesis alone, is an important component for developing robust, general-purpose robots.
\end{abstract}

\section{Introduction}

Training large-scale, data-driven generalist policies is a central approach in modern robotics. Vision-Language-Action (VLA) models are a prominent example, which demonstrate the capacity for performing tasks in unstructured environments~\citep{brohan2023rt1, black2025pi0, firoozi2025foundation, o2024open}. The premise is that broad capabilities emerge when models learn statistical patterns from datasets that have high fidelity to the real world’s causal structure.

However, this premise is often not met in practice. The significant cost and complexity of acquiring comprehensive real-world data lead to training sets with inherent statistical biases, for example, limited backgrounds, textures, and lighting. These biases can foster low-fidelity statistical cues, such as spurious correlations between an action and a background texture. This divergence between the correlations in the training data and the true causal relationships of a task creates a data-fidelity gap. This gap can drive policies toward shortcut learning~\citep{geirhos2020shortcut}, where they exploit these low-fidelity, ``spurious" cues over more predictive~\citep{ribeiro2016should, beery2018recognition}, ``core" causal ones~\citep{singla2022salient, hermann2024foundations}. The result is policies that generalize poorly and are prone to exhibit failures on specific subgroups of data where learned shortcuts become invalid, a known challenge for out-of-distribution (OOD) generalization~\citep{sagawa2020distributionally}.

A common strategy for the data-fidelity gap is to use generative models to create synthetic augmentations~\citep{bowles2018gan}. The goal is to increase visual diversity (e.g., by changing backgrounds or textures) to prevent policies from relying on spurious correlations. However, unprincipled data mixing can be counterproductive~\citep{cubuk2019autoaugment}; it presents a trade-off where the diversity from synthetic data can come at the cost of the information fidelity of real demonstrations.~\citep{de2019causal, park2021object} An excessive amount can dilute the original learning signal, leading to unstable training or a decline in performance. The central challenge is therefore not just the synthesis of varied data, but the principled composition of the final training dataset~\citep{bansal2024llm}.

This work proposes a method for systematic data composition, as overviewed in Figure~\ref{fig:overview}. The proposed framework integrates a generative engine, Multi-View Video Augmentation (MVAug), with a composition algorithm, Coherent Information Fidelity Tuning (CIFT). CIFT determines a mixing ratio by analyzing learning dynamics, with the objective of improving generalization while maintaining performance on the original task distribution. Our main contributions are:

\begin{enumerate}[leftmargin=*]
    \item Multi-View Video Augmentation (MVAug): a video-to-video augmentation engine for synthesizing multi-view consistent, causally disentangled robotic demonstrations.
    \item Coherent Information Fidelity Tuning (CIFT): a data composition framework guided by a proposed metric, Information Fidelity, to optimize the data mixing ratio and ensure training stability.
    \item Extensive empirical validation: a demonstration that CIFT improves the OOD success rate of widely-used policies by over 54\% by mitigating shortcut learning.
\end{enumerate}

\begin{figure}[t]
\centering
\includegraphics[width=1.0\linewidth]{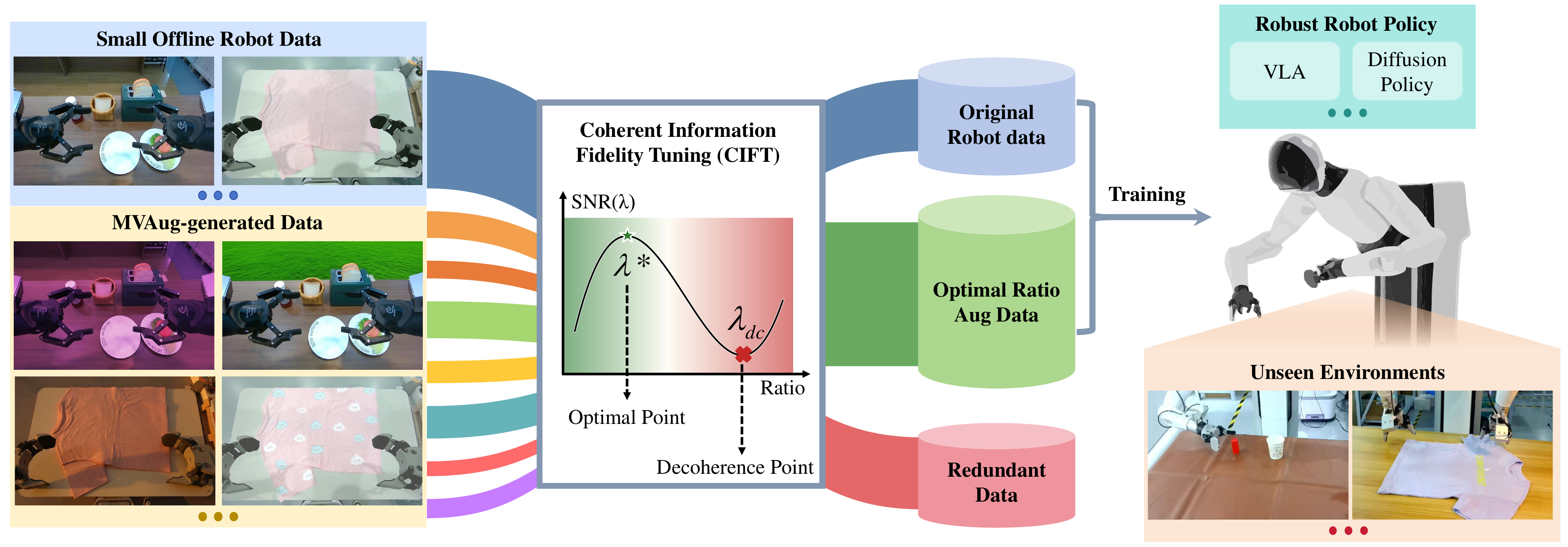}
\caption{The CIFT framework pipeline. Given a small seed dataset, our generative engine, MVAug, synthesizes a large pool of augmented data. CIFT then analyzes this pool to select a suitable data mixture that maintains information fidelity. The resulting curated dataset is used to train a robust policy that generalizes to novel environments.}
\label{fig:overview}
\end{figure}

\section{Related Work}
\label{sec:related_work}

\paragraph{Generalist Robot Policies.}
Robotics research increasingly centers on training high-capacity, generalist policies on large-scale datasets~\citep{reedgeneralist, walke2023bridgedata, o2024open}. This approach has led to the development of various architectures, from transformers~\citep{zitkovich2023rt2, brohan2023rt1, driess2023palm} to vision-language models~\citep{kim2024openvla}. The performance of this paradigm, however, is often constrained by data acquisition. The significant cost and complexity of collecting diverse real-world data can result in training sets that are visually homogeneous, a characteristic linked to the fragmentation of aggregated datasets~\citep{dasari2020robonet, xing2025shortcutlearning}. This can create a data-fidelity gap, where the training distribution does not fully capture the causal structure of real-world environments~\citep{chebotar2019closing}. This gap is a contributing factor to poor out-of-distribution (OOD) generalization, especially on coherent data subgroups where spurious correlations fail~\citep{sagawa2020distributionally}.

\paragraph{Shortcut Learning in Robotics.}
Shortcut learning is a primary consequence of the data-fidelity gap, where models adopt decision rules that perform well on standard benchmarks but show poor generalization to new environments~\citep{geirhos2020shortcut, ye2024spurious}. Policies trained on biased data may learn to exploit spurious features~\citep{baker2019emergent, izmailov2022feature, singla2022salient}, such as background textures that are predictive in the training set~\citep{xiao2021noise, luo2021rectifying, tobin2017domain}. Such features are often learned because they are highly available, meaning they are easy for a model to extract. This reliance on spurious correlations is a known characteristic of deep nonlinear models, which can prioritize feature availability over causal predictivity~\citep{hermann2024foundations}. Applying certain training paradigms, such as distributionally robust optimization (DRO), may be insufficient without careful regularization~\citep{sagawa2020distributionally}, and some methods like adversarial or contrastive training may even increase background sensitivity~\citep{moayeri2022comprehensive}. This issue is particularly relevant in robotics, where dataset fragmentation can foster the learning of shortcuts~\citep{xing2025shortcutlearning}, presenting a barrier to deployment.

\paragraph{Data Augmentation for Generalization.}
To address the challenges of data scarcity and shortcut learning, data augmentation has become a widely used strategy. Recent work has advanced data \textit{synthesis} for creating varied robotic demonstrations. This includes methods for background randomization~\citep{chen2023genaug, teoh2024green, yuan2025roboengine}, semantically conditioned modifications~\citep{chen2024semantically}, video-to-video translation~\citep{agarwal2025cosmos, liu2025robotransfer}, and object-aware debiasing~\citep{mo2021object}. This progress in synthesis, however, highlights the challenge of principled data \textit{composition}. The literature often relies on ad-hoc heuristics, and lacks a formal methodology for navigating the trade-off between visual diversity and information fidelity. Our work addresses this challenge by formalizing the principled integration of synthetic data.

\section{Preliminaries}
\label{sec:preliminaries}

This work addresses shortcut learning, where policies exploit spurious correlations in training data instead of learning generalizable causal relationships. To ground our solution, we adopt a causal framework, detailed in Appendix~\ref{app:foundations_and_proofs}, to formalize this failure mechanism and the corresponding debiasing task.

\subsection{Shortcut Learning as Causal Model Misspecification}
\label{subsec:causal_invariance_failure}

To formalize shortcut learning, we model an observation $x$ as a composite of a core causal feature $u$ and a shortcut feature $v$ (see Definition~\ref{def:features} for details). For instance, in a robotic picking task, $u$ could represent the object's geometry and pose, while $v$ might be a background texture consistently paired with the object in the training data. Shortcut learning arises when $v$ is easier for a model to learn (highly available) than $u$, even if $u$ is more predictive of the correct action~\citep{hermann2024foundations}, and a spurious correlation exists between $u$ and $v$ in the training data (Assumption~\ref{assump:shortcut_condition}).

An ideal policy, $\pi^*$, should achieve causal invariance by basing its action $a$ solely on the causal feature $u$~\citep{arjovsky2019invariant, pearl2009causality}. This is formally expressed as conditional independence from $v$ (Definition~\ref{def:ideal_policy}):
\begin{equation}
\label{eq:causal_invariance}
P(a|u, v) = P(a|u).
\end{equation}
A policy exhibiting shortcut learning fails to achieve this invariance~\citep{bengio2013representation}, instead developing a dependency on $v$ (Definition~\ref{def:shortcut_learning}).

\subsection{Debiasing as Constrained Optimization}
\label{subsec:debiasing_as_optimization}

This causal model allows for a precise definition of generalization settings. The in-distribution (ID) setting mirrors the training statistics, where the spurious correlation between $u$ and $v$ holds. The out-of-distribution (OOD) setting comprises environments where this correlation is broken (e.g., $u$ appears with a novel $v'$)~\citep{koh2021wilds}.

The debiasing strategy is to train the policy on a composed data distribution, $P_{\text{final}}$, which is a convex combination of real data ($P_{\text{real}}$) and synthetic data ($P_{\text{synth}}$) controlled by a mixing ratio $\lambda \in [0, 1]$ (see Definition~\ref{def:composed_dist})~\citep{zhang2018mixup}. This strategy's objective is to find an optimal mixing ratio $\lambda^*$ that navigates the Diversity-Information Fidelity trade-off (Definition~\ref{def:tradeoff})~\citep{tsipras2019robustness}. This goal is formalized as the following constrained optimization problem (Problem~\ref{prob:optimal_composition}):
\begin{equation}
\label{eq:constrained_opt_prelim}
\begin{aligned}
\lambda^* = \underset{\lambda \in [0, 1]}{\arg\max} & \quad \mathcal{P}_{\text{OOD}}(\pi_{\theta^*(\lambda)}) \\
\text{s.t.} & \quad \mathcal{P}_{\text{ID}}(\pi_{\theta^*(\lambda)}) \ge \mathcal{P}_{\text{ID}}(\pi_{\theta^*(0)}) - \epsilon,
\end{aligned}
\end{equation}
where $\pi_{\theta^*(\lambda)}$ is the policy optimized on the data mixture defined by $\lambda$. Directly solving Equation~\ref{eq:constrained_opt_prelim} is often intractable, as it requires evaluating performance on the true OOD distribution during training. This motivates the need for a practical proxy to guide the selection of $\lambda$, which our work provides.

\begin{figure}[t!]
    \centering
    \includegraphics[width=1.0\linewidth]{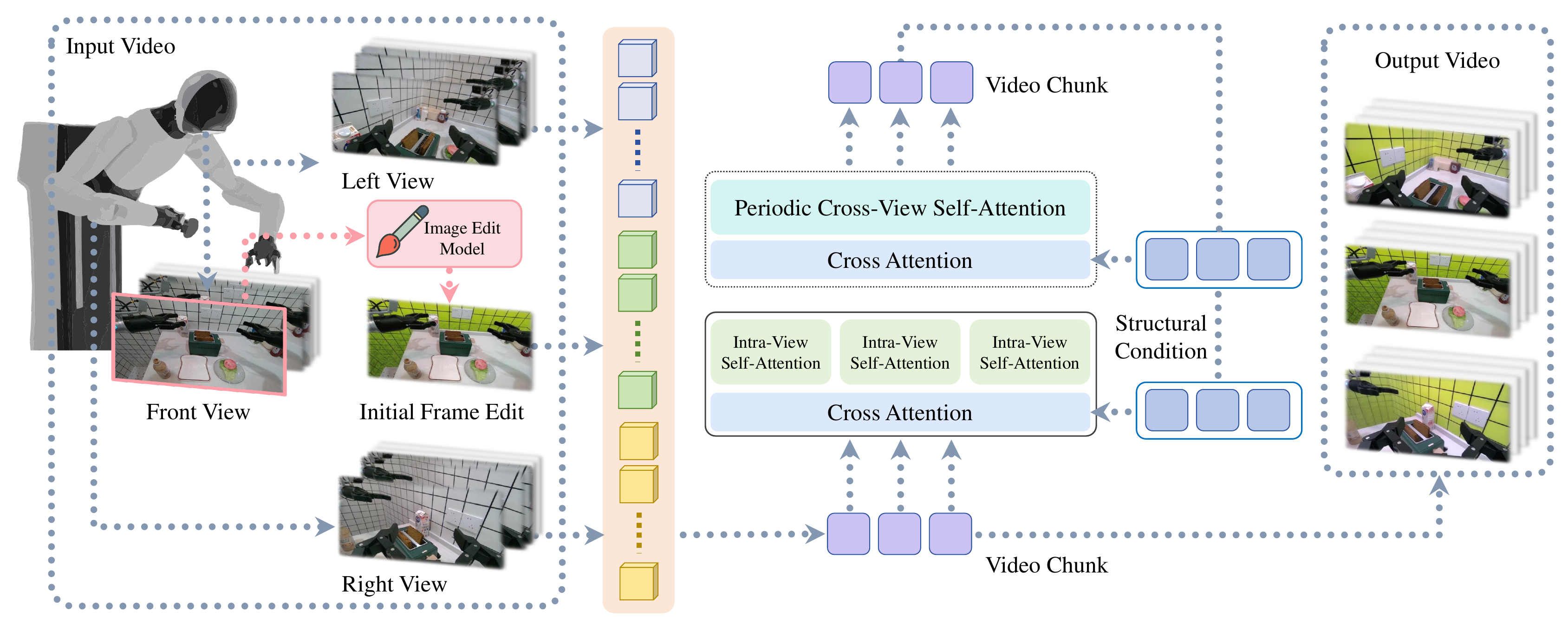}
    \caption{
        An overview of the MVAug architecture, a latent diffusion transformer for multi-view video synthesis. The model generates a new multi-view video conditioned on three inputs: the original multi-view footage, an edited initial frame from a primary viewpoint, and a guiding structural prior. A key component, the Periodic Cross-View Attention mechanism (detailed in Section~\ref{subsec:mvaug_component}), ensures the resulting video is fully consistent across all viewpoints.
    }
    \label{fig:mvaug_arch}
\end{figure}

\section{Methodology}
\label{sec:methodology}

Our methodology addresses the optimization problem in Eq.~\ref{eq:constrained_opt_prelim} with a two-stage framework called Coherent Information Fidelity Tuning (CIFT). First, a generative model synthesizes a candidate pool of diverse data. Second, a composition strategy selects a data mixture from this pool, guided by a proxy for information fidelity.

\subsection{Generative Disentanglement via Multi-View Augmentation}
\label{subsec:mvaug_component}

The generative engine of CIFT is Multi-View Video Augmentation (MVAug), a latent diffusion transformer tasked with synthesizing a controllable spectrum of causally disentangled training data from multi-view robot demonstrations~\citep{rombach2022high, peebles2023scalable}. As illustrated in Figure~\ref{fig:mvaug_arch}, MVAug processes tokenized video chunks from multiple camera perspectives.

The model's controllability stems from its conditioning mechanism. The generation process is guided by a structural prior, provided as a Canny edge map from the source video~\citep{canny2009computational, zhang2023adding}, to maintain motion fidelity. To introduce novel visual contexts, it is also conditioned on an appearance prior. This prior is an edited image generated by the first-frame editing model FLUX.1-Kontext-dev~\citep{labs2025flux}, which we adopt for first-frame editing based on textual prompts such as new backgrounds or lighting conditions (Figure~\ref{fig:editing_examples})~\citep{molad2023dreamix}.

A key architectural feature of MVAug is its handling of multi-view data. To ensure the generated video is coherent across different camera perspectives, we introduce a periodic cross-view attention mechanism. This strategy modulates the behavior of the transformer's self-attention layers. Most layers perform intra-view self-attention, processing the features for each view independently. However, at periodic intervals, the model executes a global cross-view self-attention operation, where tokens from all views are jointly processed. This periodic information fusion allows the model to build a globally consistent representation while managing computational complexity~\citep{kitaev2020reformer}. The model is trained with a standard denoising diffusion objective~\citep{ho2020denoising}:
\begin{equation}
\label{eq:mvaug_loss}
    \mathcal{L}(\phi) = \mathbb{E}_{z_0, \epsilon \sim \mathcal{N}(0, \mathbf{I}), t, c_{\text{cond}}} \left[ \lVert\epsilon - \epsilon_{\phi}(z_t, t, c_{\text{cond}})\rVert^2 \right],
\end{equation}
where $z_t$ is the noised latent and $c_{\text{cond}}$ comprises all conditioning inputs. Detailed architectural specifications, pseudo-code for the attention mechanism, and training hyperparameters are provided in Appendix~\ref{app:model_details}. Qualitative examples of videos synthesized by MVAug are presented in Appendix~\ref{sec:synthesis_visuals}.

\begin{figure}[h!]
    \centering
    \includegraphics[width=1.0\linewidth]{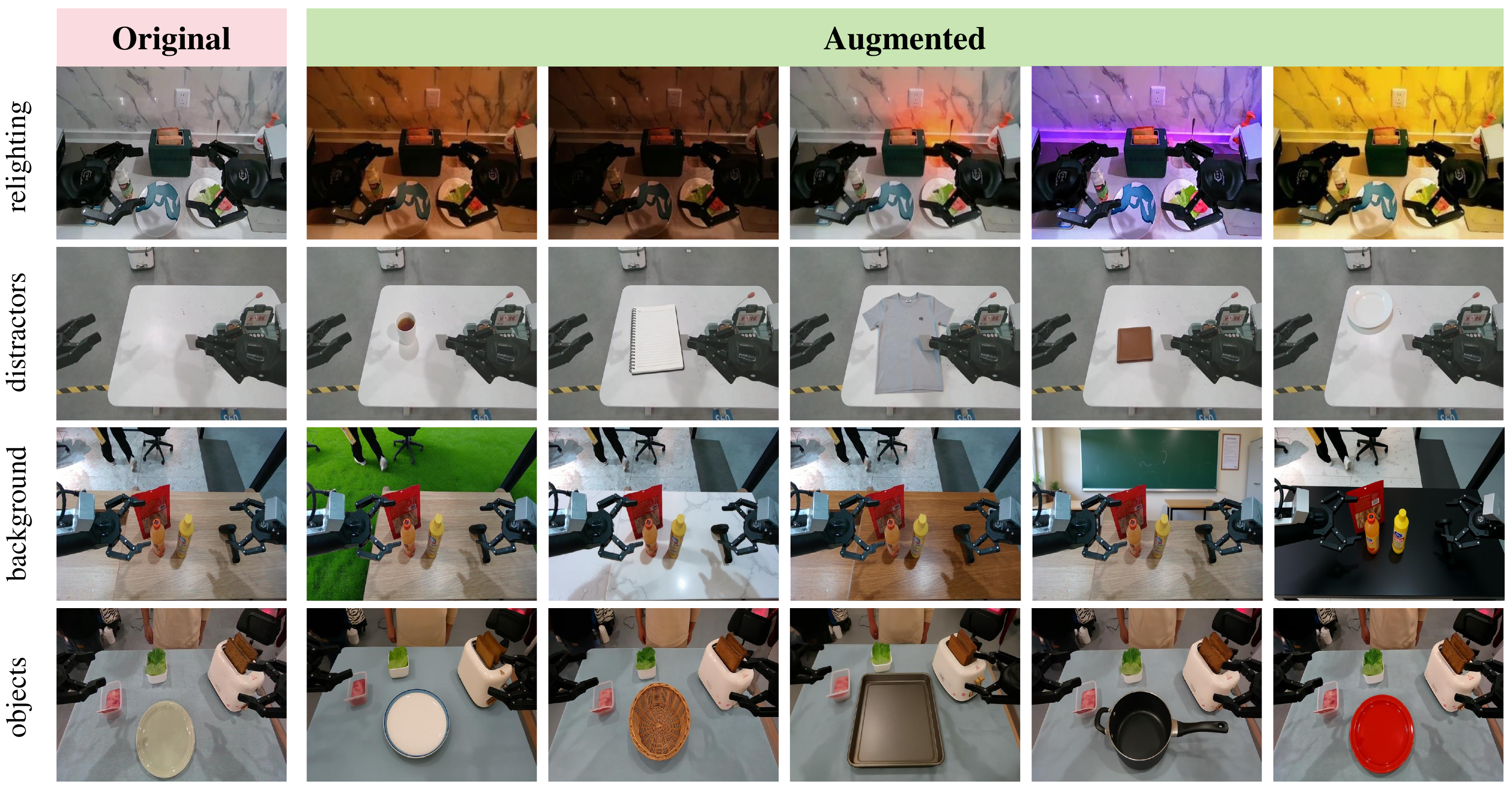}
    \caption{Generation of the appearance prior for MVAug. For a multi-view video input, we only edit the first frame from the primary camera view (e.g., the head camera). Given this source frame (left), we use the image-edit model FLUX.1-Kontext-dev~\citep{labs2025flux} to generate an edited version according to various textual prompts (e.g., ``dusk'', ``cinematic''). This single edited frame serves as the global appearance prior, guiding the MVAug engine to synthesize a consistent visual style across all camera views for the entire video sequence.}
    \label{fig:editing_examples}
\end{figure}

\subsection{Principled Composition via Information Fidelity}
\label{subsec:cift_composition}

The MVAug engine generates a large pool of synthetic demonstrations, denoted $\mathcal{D}_{\text{synth}}$. The subsequent challenge is to determine how to best compose this synthetic data with the original real dataset, $\mathcal{D}_{\text{real}}$. A simple mixture of the two can be suboptimal. The CIFT framework provides a method to determine a suitable mixing ratio, $\lambda$, for this composition.

Our composition strategy is guided by the concept of Information Fidelity, defined as the constructive alignment of learning signals between real and synthetic data. As direct gradient-based measurement is intractable, CIFT uses a practical proxy based on the feature-space geometry of the combined dataset~\citep{heusel2017gans}. This proxy, which we term the Feature-Space Signal-to-Noise Ratio (SNR), is formally defined as follows. Let $\mathcal{D}_{\text{final}}(\lambda)$ be the composed dataset for a given mixing ratio $\lambda$, and let $F_\lambda = \{f(x) | x \in \mathcal{D}_{\text{final}}(\lambda)\}$ be the set of feature vectors extracted by a pre-trained model $f(\cdot)$ (see Appendix~\ref{subsec:backbone_selection} for backbone analysis). Let $w_1$ be the first principal component of the covariance matrix of $F_\lambda$~\citep{jolliffe2016principal, balakrishnan2018unsupervised}. The Feature-Space SNR is then:
\begin{equation}
\label{eq:snr_proxy}
\text{SNR}(\lambda) = \frac{|\mathbb{E}_{f \in F_\lambda}[f^T w_1]|}{\sqrt{\text{Var}_{f \in F_\lambda}[f^T w_1]}}.
\end{equation}
The full calculation protocol for this metric is detailed in Appendix~\ref{sec:appendix_cift_foundations}.

We use this SNR as our proxy for Information Fidelity. Analysis shows a non-monotonic relationship between the mixing ratio $\lambda$ and the SNR. As $\lambda$ increases, the SNR may reach a peak and then decline. We term the critical phase transition the ``Decoherence Point'', $\lambda_{dc}$, which we define as the mixing ratio at which the Feature-Space SNR reaches a local minimum, indicating a collapse in feature coherence~\citep{tishby2015deep, alemi2017deep}. The CIFT procedure simplifies the optimization problem in Eq.~\ref{eq:constrained_opt_prelim} by finding the ratio $\lambda^*$ that maximizes this data curation proxy, while operating within this coherent regime:
\begin{equation}
\label{eq:cift_objective}
\lambda^* = \underset{\lambda \in [0, \lambda_{dc})}{\arg\max} \ \text{SNR}(\lambda).
\end{equation}
The ratio $\lambda^*$ is then used to compose the final training dataset.

\section{Experiments}
\label{sec:experiments}

Our experiments validate the CIFT framework across three axes. We first confirm that our SNR proxy predicts open-loop policy stability, then conduct ablation studies on data synthesis and composition, and finally evaluate the end-to-end framework's effectiveness on physical robotic tasks.

\begin{figure}[t!]
    \centering
    \subfigure[Physical robot setup for on-robot evaluations.]{
        \includegraphics[height=3.8cm]{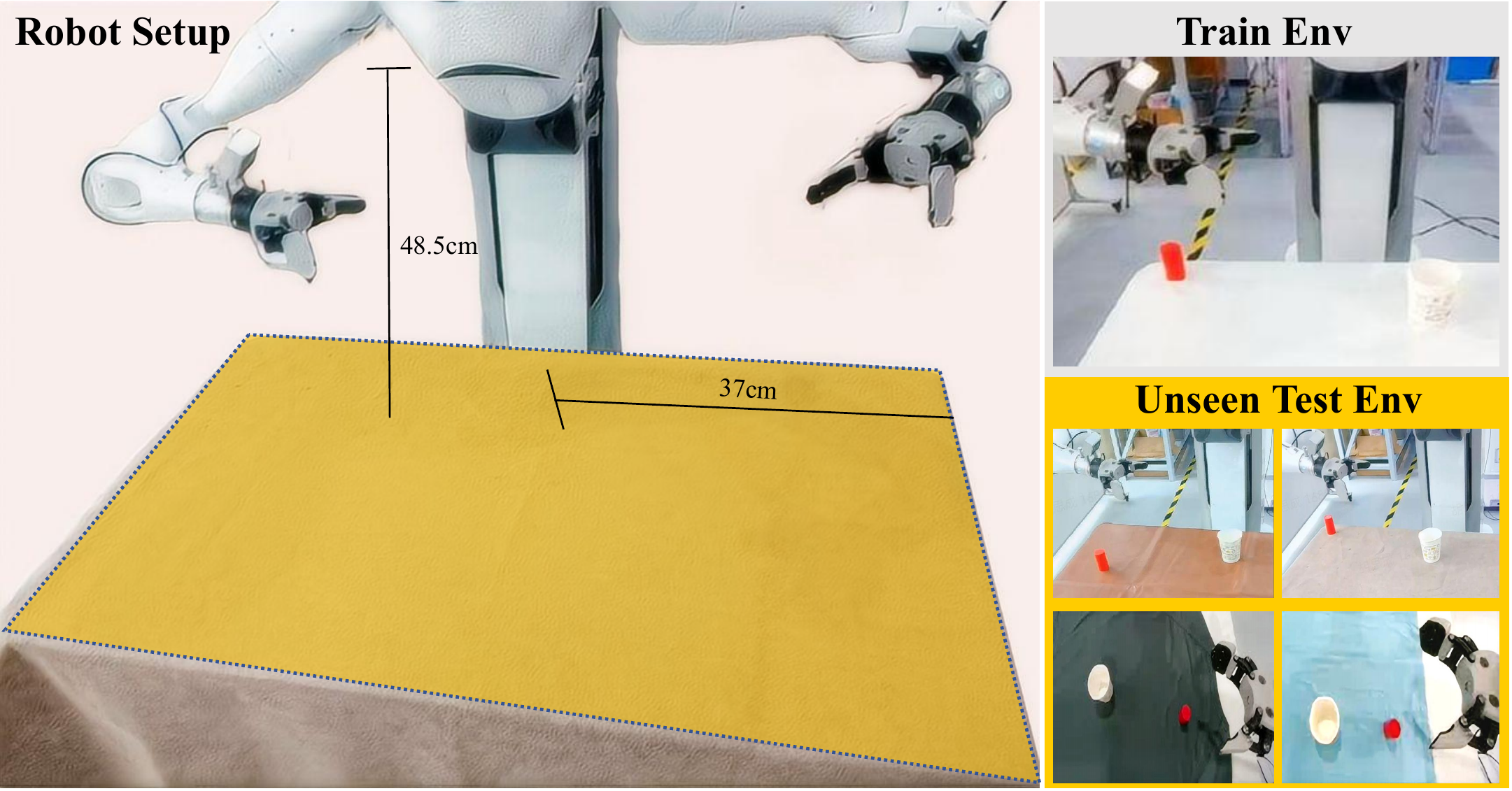}
        \label{fig:robot_setup}
    }
    \hfill
    \subfigure[Robustness Score (RS) vs. mixing ratio for different augmentation methods.]{
        \includegraphics[height=3.8cm]{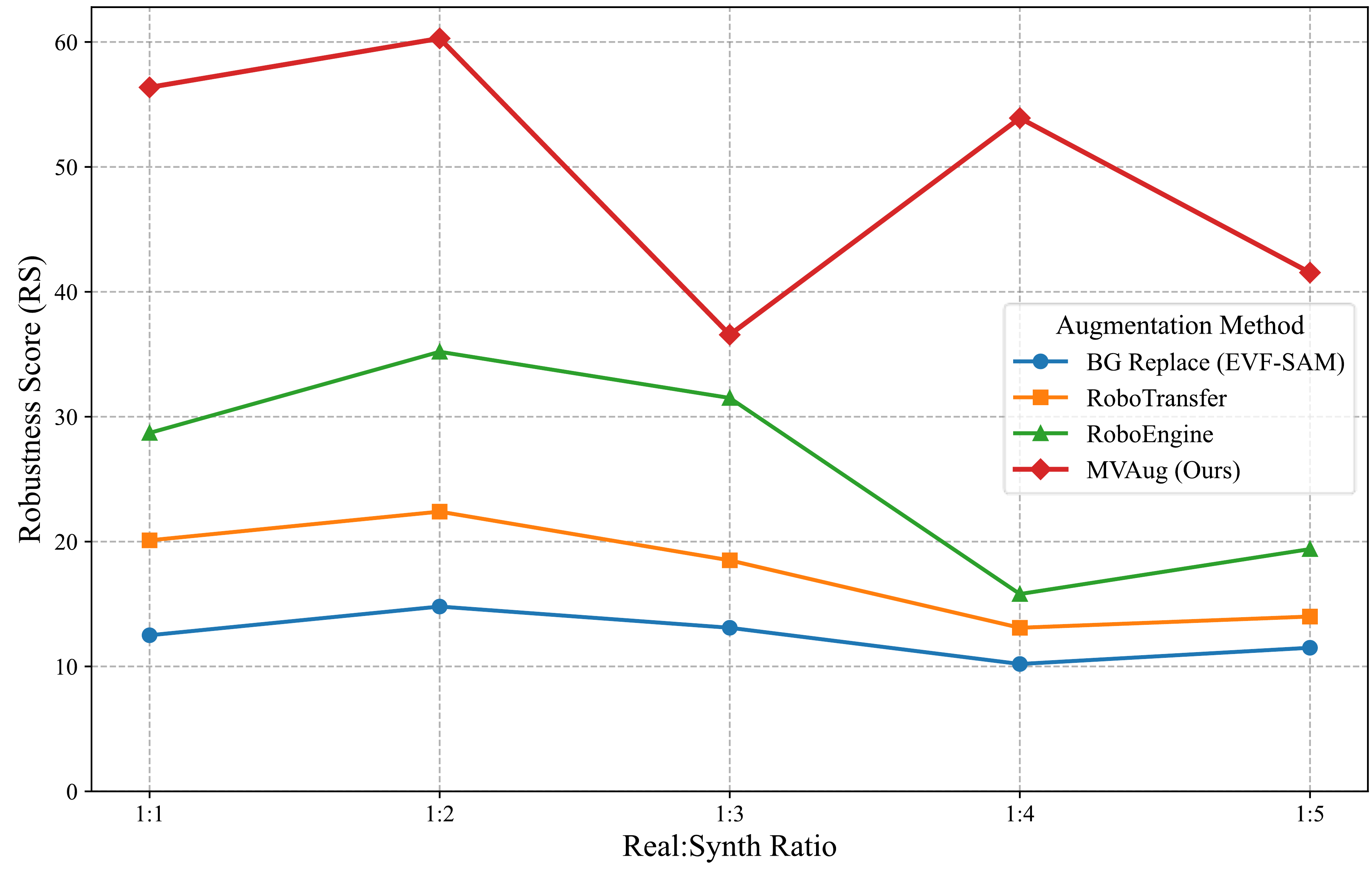}
        \label{fig:ablation_rs_plot}
    }
    \caption{Experimental platform and ablation study results. (a) The physical dual-arm setup used for all on-robot, closed-loop evaluations. (b) The non-linear relationship between the data mixing ratio and the policy's Robustness Score (RS) across different augmentation methods.}
    \label{fig:ablation_and_setup}
\end{figure}

\subsection{Experimental Setups}
\label{subsec:exp_setup_main}

\paragraph{Tasks and Platforms.}
Our experiments utilize a range of robotic tasks and platforms. We conduct the open-loop stability analysis on the dual-arm cloth folding task~\citep{ross2011reduction, raval2024gpt}. This task was selected specifically because it requires the policy to generate actions across the full 14-dimensional output space of the $\pi_0$ Policy~\citep{black2024pi0}, providing a comprehensive basis for evaluating prediction stability. For generative model comparisons, we use the Agibot World dataset~\citep{bu2025agibot}. For on-robot closed-loop evaluations on our physical dual-arm setup (Figure~\ref{fig:robot_setup}), we selected two tasks to represent distinct manipulation challenges: picking up a toy (a representative single-arm task) and folding clothes (a complex dual-arm task).

\paragraph{Baselines.}
Our evaluation is structured around two sets of comparisons. For the ablation and stability studies, we compare different data augmentation strategies against non-generative and generative baselines~\citep{chen2020simple}. For the end-to-end on-robot evaluations, we compare baseline policies, $\pi_0$~\citep{black2024pi0} and Diffusion Policy~\citep{chi2023diffusion}, trained on real data only against CIFT-trained counterparts. The performance for this evaluation is measured by Task Success Rate under both in-distribution (ID) and out-of-distribution (OOD) conditions~\citep{koh2021wilds}.

\paragraph{Evaluation Metrics.}
We evaluate performance on two fronts: generative model quality and downstream policy performance. Generative quality is assessed using standard metrics; detailed definitions are provided in Appendix~\ref{sec:appendix_metrics}. For policy evaluation, we use two metrics. Open-loop stability is quantified by our proposed Robustness Score (RS). This score is based on the Mean Squared Error (MSE) between a policy's predicted action trajectory and the ground-truth robot actions, evaluated on held-out ID and OOD video observations. A detailed description of the open-loop evaluation protocol, the RS formulation, and the corresponding results is provided in Appendix~\ref{sec:appendix_open_loop_details}. The score is calculated as:
\begin{equation}
\label{eq:robustness_score}
\text{RS}(\lambda) = \max\left(0, \left(1 - \frac{\overline{\text{MSE}}_{\text{OOD}}(\lambda)}{\overline{\text{MSE}}_{\text{OOD}}(0)}\right)\right) \times 100 \times \left(\frac{\overline{\text{MSE}}_{\text{ID}}(0)}{\overline{\text{MSE}}_{\text{ID}}(\lambda)}\right).
\end{equation}
Closed-loop on-robot performance is measured by the Task Success Rate.

\paragraph{Implementation Details.}
The open-loop stability analysis and primary on-robot evaluations are based on the full fine-tuning of a $\pi_0$ foundation model~\citep{black2024pi0}. For each data configuration, training this model on our dataset of 200 real-world, multi-view video episodes, each with an average of 2000 frames at 30 FPS, required approximately 50 hours on 8 NVIDIA H100 GPUs. To validate that our CIFT framework is model-agnostic, we also performed an on-robot validation using a three-view Diffusion Policy. For this, we trained a baseline policy on real data only and a corresponding policy using the CIFT-composed dataset, with each training run for the Diffusion Policy requiring approximately 80 hours on 16 H100 GPUs. All policy inference for both the open-loop analysis and the on-robot evaluations was conducted on a workstation equipped with an NVIDIA RTX 4090 GPU.

\begin{figure}[t!]
    \centering
    \subfigure[Robustness Score (RS) and Feature SNR as a function of the data mixing ratio. The left axis corresponds to the RS of the trained policy. The right axis corresponds to the SNR of the dataset's features prior to training. Shaded regions denote different data composition phases.]{
        \includegraphics[width=0.48\textwidth]{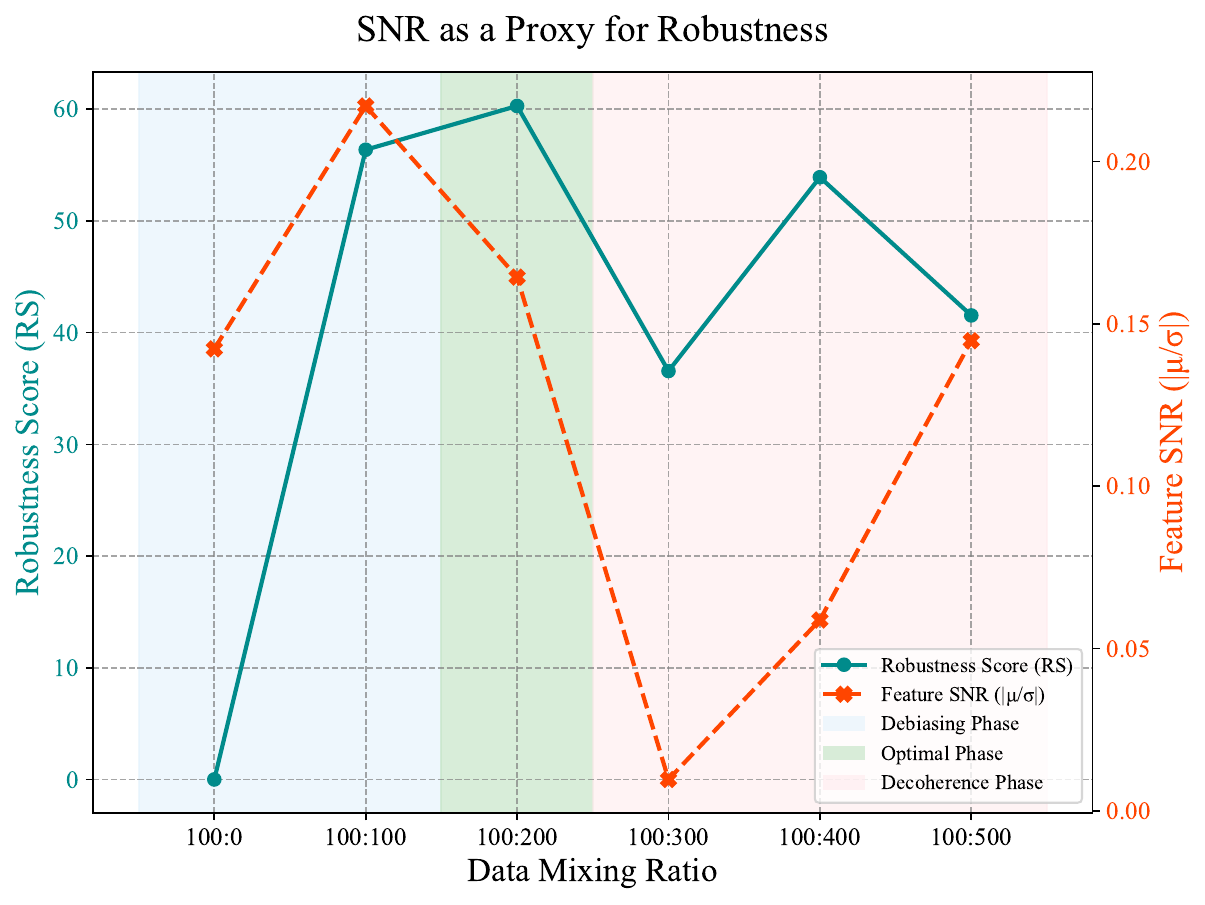}
        \label{fig:correlation_plot}
    }
    \hfill
    \subfigure[The evolution of the feature distribution's first and second moments. The y-axis shows the standard deviation ($\sigma$, noise) of the features' first principal component. The size and color of each data point correspond to the magnitude of the mean ($\mu$, signal). Each point represents a different data mixing ratio.]{
        \includegraphics[width=0.48\textwidth]{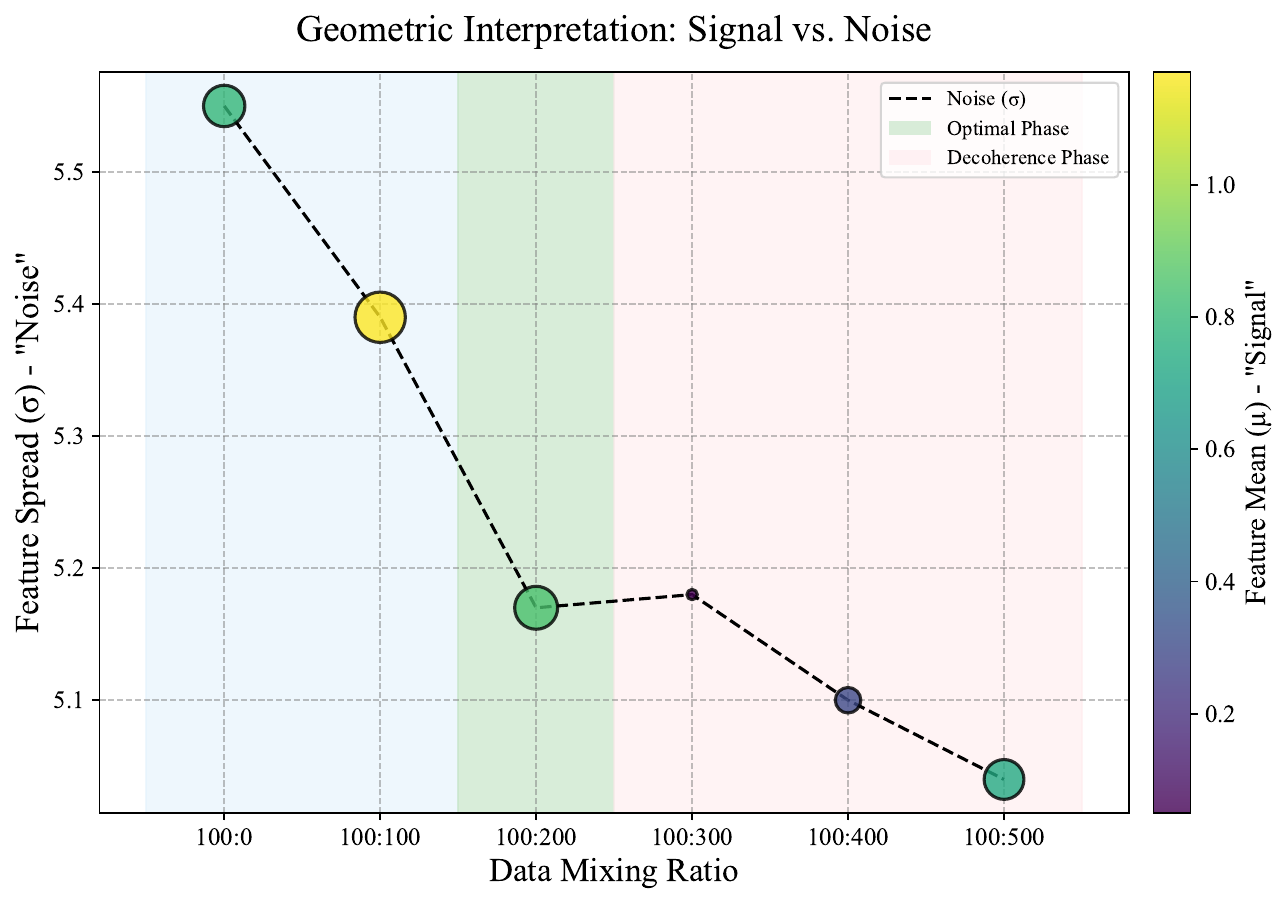}
        \label{fig:distribution_evolution}
    }
    \caption{Experimental validation of the Feature SNR proxy. (a) The relationship between the pre-training SNR and post-training policy robustness (RS). (b) The underlying changes in the feature distribution's mean ($\mu$, signal) and standard deviation ($\sigma$, noise).}
    \label{fig:combined_validation}
\end{figure}

\subsection{Validating SNR as a Predictor of Open-Loop Stability}
\label{subsec:mechanism_validation}

The central hypothesis of this work is that a static, pre-training analysis of a dataset's feature geometry can predict the open-loop stability of a policy subsequently trained on that data. This section validates this hypothesis by demonstrating that our Feature-Space Signal-to-Noise Ratio (SNR) serves as an effective proxy for the post-training Robustness Score (RS). The concept of representing a distribution's quality via the ratio of its mean (signal) to its standard deviation (noise) is a foundational principle in information theory~\citep{cover1999elements}. Our motivation for employing this proxy is further detailed in the open-loop analysis in Appendix~\ref{sec:appendix_open_loop_details}, which shows that naively increasing synthetic data leads to a non-linear stability response.

\paragraph{Analysis of Correlation and Feature Dynamics.}
We test our hypothesis using the open-loop evaluation protocol detailed in Section~\ref{subsec:exp_setup_main} and Appendix~\ref{sec:appendix_open_loop_details}. The complete quantitative results are presented in Table~\ref{tab:fidelity_validation_data}, while Figure~\ref{fig:combined_validation} provides a visual analysis of the relationship between the pre-training SNR and the post-training RS. As shown in Figure~\ref{fig:correlation_plot}, the SNR (dashed line) peaks at a 100:100 mixing ratio, preceding the peak of the final RS (solid line) at 100:200. The sharp decline in SNR at the 100:300 ratio serves as a leading indicator of the corresponding collapse in policy stability, validating its utility for identifying the decoherence point.

Figure~\ref{fig:distribution_evolution} provides a mechanistic explanation for this phenomenon by visualizing the underlying feature dynamics. It illustrates that the decoherence point corresponds to a geometric shift where the feature signal ($\mu$, bubble size and color) collapses while the noise ($\sigma$, y-axis) increases. This collapse in open-loop stability is qualitatively visualized in Figure~\ref{fig:open_loop}, which contrasts a smooth trajectory from the CIFT-selected data mix (b) with a catastrophic failure from the decoherence point (c).

\begin{figure}[h!]
\centering
\begin{minipage}{0.32\textwidth}
    \centering
    \includegraphics[width=\linewidth]{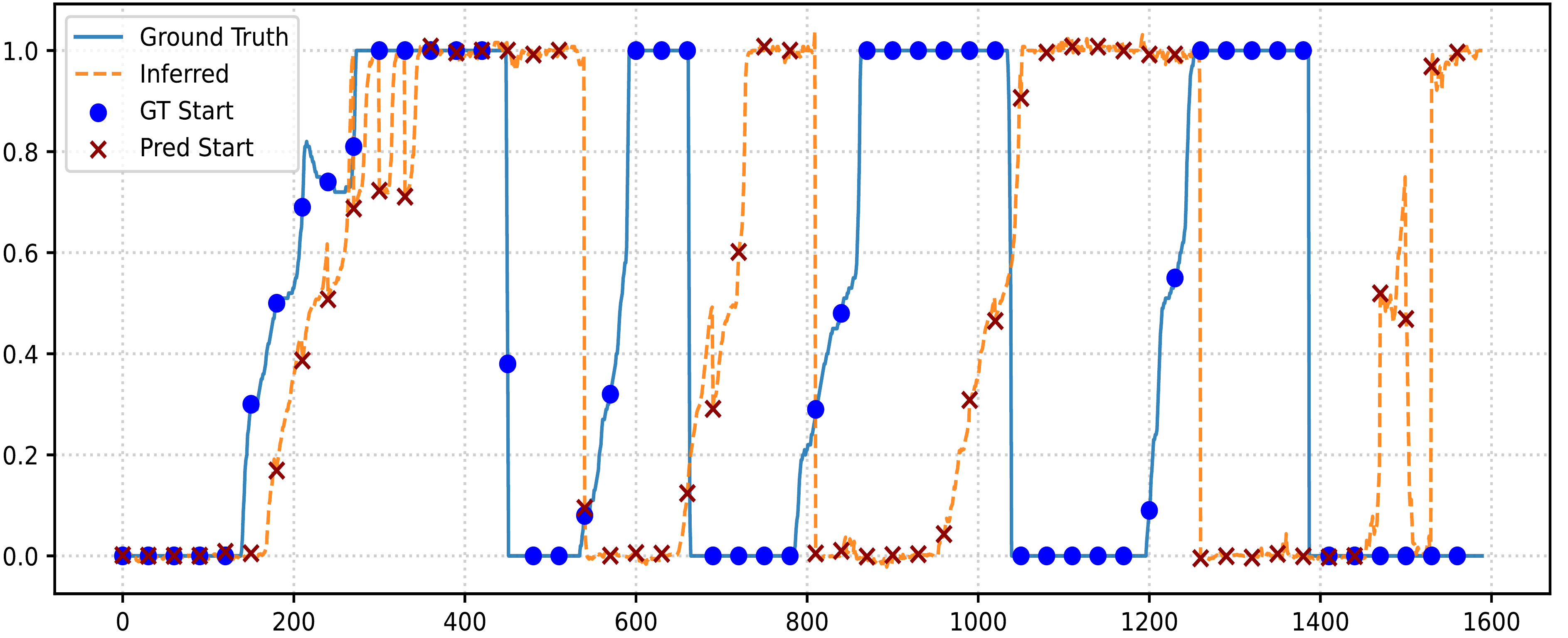}
    \vspace{-0.5em}
    \small{(a) Original Point}
    \label{fig:original_point}
\end{minipage}%
\hfill
\begin{minipage}{0.32\textwidth}
    \centering
    \includegraphics[width=\linewidth]{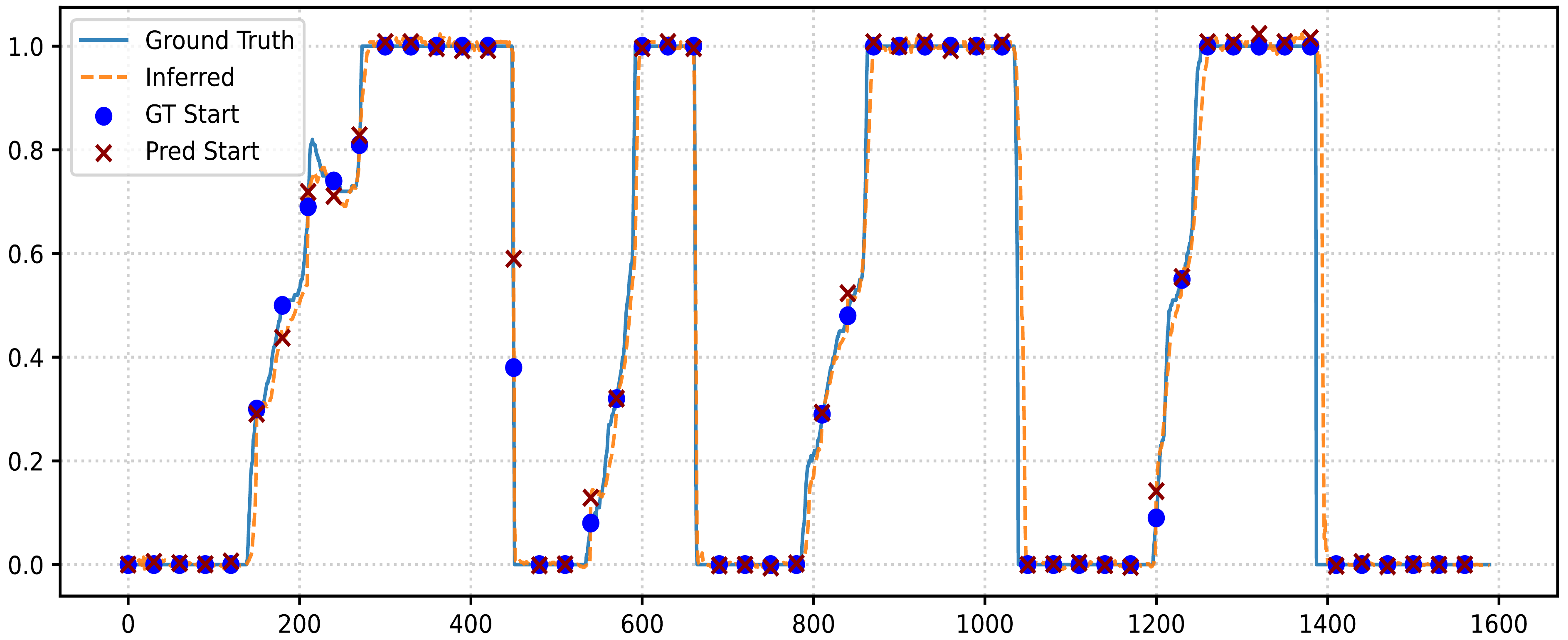}
    \vspace{-0.5em}
    \small{(b) Optimal Point}
    \label{fig:optimal_point}
\end{minipage}%
\hfill
\begin{minipage}{0.32\textwidth}
    \centering
    \includegraphics[width=\linewidth]{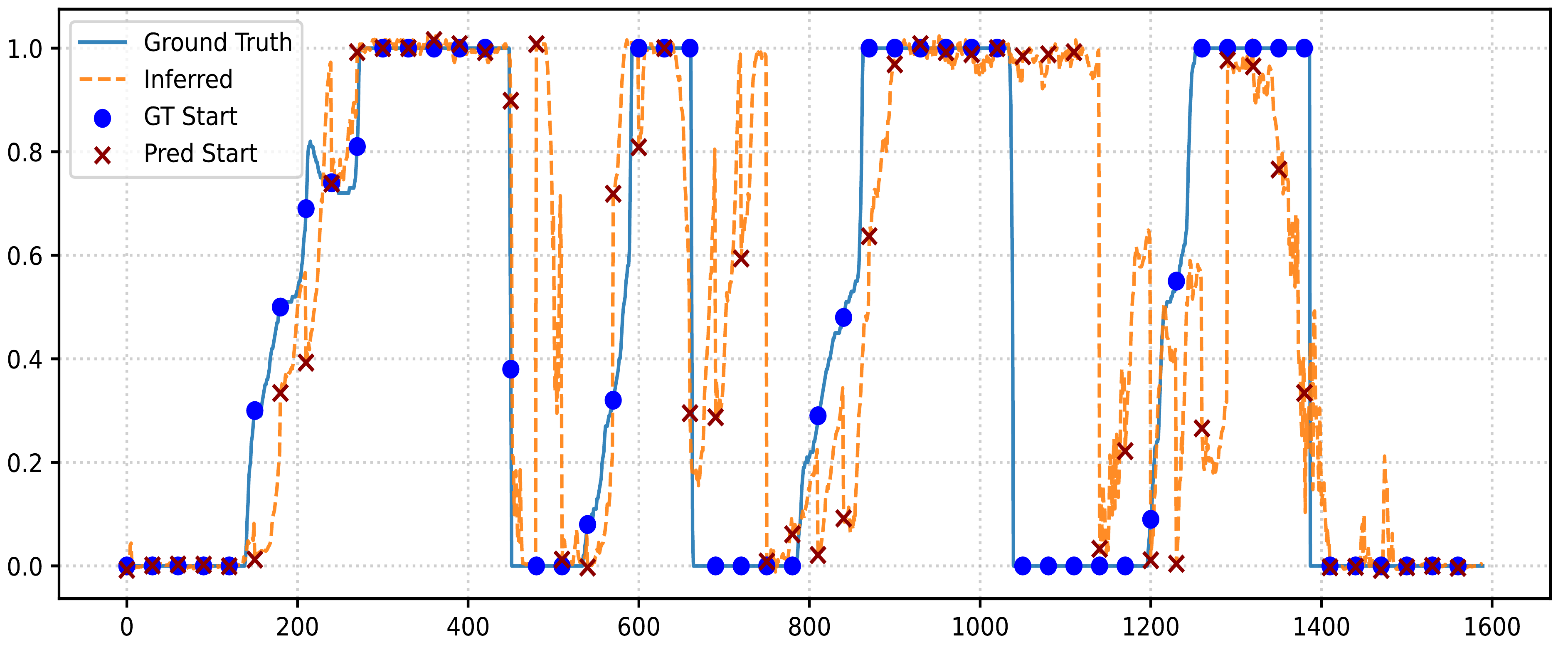}
    \vspace{-0.5em}
    \small{(c) Decoherence Point}
    \label{fig:decoherence_point_viz}
\end{minipage}
\caption{
Qualitative visualization of open-loop rollouts. The trajectory generated at the CIFT-selected optimal point (b) is smooth and accurate, whereas the trajectory at the decoherence point (c) exhibits catastrophic failure.
}
\label{fig:open_loop}
\end{figure}

\begin{table}[h!]
\centering
\resizebox{\linewidth}{!}{%
\begin{tabular}{@{}lcccc@{}}
\toprule
Mixing Ratio (Real:Synth) & Feature-Space SNR ($|\mu/\sigma|$) $\uparrow$ & OOD MSE $\downarrow$ & ID MSE $\downarrow$ & Robustness Score (RS) $\uparrow$ \\
\midrule
100:0 (Baseline) & 0.1423 & 0.0700 & 0.0021 & 0.00 \\
100:100 (CIFT's Choice) & 0.2171 & 0.0010 & 0.0036 & 56.37 \\
100:200 (Peak RS) & 0.1644 & 0.0010 & 0.0034 & 60.29 \\
\rowcolor{red!15}
100:300 (Decoherence Point) & 0.0097 & 0.0242 & 0.0037 & 36.56 \\
100:400 & 0.0588 & 0.0015 & 0.0037 & 53.91 \\
100:500 & 0.1448 & 0.0018 & 0.0048 & 41.54 \\
\bottomrule
\end{tabular}%
}
\caption{Quantitative validation of the SNR proxy. The data shows a positive correlation between the pre-tuning Feature-Space SNR and the final post-training policy stability (RS). The 1:3 ratio corresponds to a drop in both metrics.}
\label{tab:fidelity_validation_data}
\end{table}

\subsection{Ablation Studies}
\label{subsec:ablations}

We conduct ablation studies to analyze the contributions of data synthesis quality and the composition strategy (Table~\ref{tab:mvaug_sota_comparison}, Figure~\ref{fig:ablation_rs_plot}). A full report on our generative model, including detailed ablation results (Table~\ref{tab:mvaug_ablations}), a discussion of quantitative metrics, and a human evaluation study, is provided in Appendix~\ref{sec:appendix_gen_model_analysis}.

\paragraph{Effect of Synthesis Quality.}
The results first show the effect of synthesis quality on augmentation. As shown in Table~\ref{tab:mvaug_sota_comparison}, MVAug obtains more favorable scores on generative quality metrics such as FVD compared to the baselines. This is achieved with significant computational efficiency, as detailed in our performance analysis in Appendix~\ref{sec:appendix_performance}. This improvement in synthesis quality corresponds to higher open-loop stability, as illustrated by the peak Robustness Score (RS) values in the line chart in Figure~\ref{fig:ablation_rs_plot}. The chart shows that augmentations from MVAug achieve a higher peak RS (60.29) than the baselines (RoboEngine: 35.2, RoboTransfer: 22.4, and BG Replace: 14.8).

\paragraph{Effect of the Composition Strategy.}
The results also show the effect of the composition strategy. As illustrated in the chart in Figure~\ref{fig:ablation_rs_plot}, for all augmentation methods, the RS exhibits a non-linear dependence on the mixing ratio. The score initially improves with the addition of synthetic data but then degrades after a certain point. For instance, the curve for MVAug peaks at a 1:2 ratio (60.29) and drops at a 1:3 ratio (36.56). This non-linear response suggests that increasing the quantity of synthetic data does not guarantee improved performance and motivates the need for a data-driven method to identify a suitable mixing ratio.

\begin{table}[h!]
\centering
\resizebox{\textwidth}{!}{%
\begin{tabular}{@{}lcccccccc@{}}
\toprule
\multirow{2}{*}{Method} & \multicolumn{2}{c}{Realism} & \multicolumn{2}{c}{View Consistency} & \multicolumn{3}{c}{Temporal Coherence} & {Text Align.} \\
\cmidrule(lr){2-3} \cmidrule(lr){4-5} \cmidrule(lr){6-8} \cmidrule(l){9-9}
& FVD $\downarrow$ & FID $\downarrow$ & CVFC $\uparrow$ & MVDC $\uparrow$ & Ewarp $\downarrow$ & T-LPIPS $\downarrow$ & TCJ $\downarrow$ & CLIP Score $\uparrow$ \\
\midrule
RoboEngine & 1463.49 & 221.5 & 0.7658 & 0.6001 & 212.5 & 652.3 & 3.713 & 22.42 \\
RoboTransfer & 2854.5 & 323.5 & 0.8278 & 0.3960 & 9.2 & 242.1 & 1.649 & 21.07 \\
MVAug (Ours) & 545.7 & 104.6 & 0.8023 & 0.6318 & 3.7 & 10.1 & 0.218 & 22.89 \\
\bottomrule
\end{tabular}%
}
\caption{
    Quantitative comparison of generative model quality. Detailed metric definitions are in Appendix~\ref{sec:appendix_metrics}; our proposed CVFC metric is computed using CLIP features~\citep{radford2021learning}. Citations for other metrics include: FVD~\citep{unterthiner2018towards}, FID~\citep{heusel2017gans}, MVDC~\citep{ranftl2020towards}, Ewarp~\citep{lai2018learning}, T-LPIPS~\citep{chu2020learning}, TCJ~\citep{huynh2006impact}, and CLIP Score~\citep{hessel2021clipscore}. Arrows indicate whether higher ($\uparrow$) or lower ($\downarrow$) scores are better.
}
\label{tab:mvaug_sota_comparison}
\end{table}

\subsection{On-Robot Generalization Performance}
\label{subsec:on_robot_results}

This section evaluates the end-to-end, closed-loop performance of our CIFT-trained policies against baselines on a physical robotic platform.

\paragraph{Evaluation Protocol.}
We evaluated each trained policy under two distinct sets of conditions. In-distribution (ID) evaluations were conducted in environments visually congruent with the original real-world data collection setup. Out-of-distribution (OOD) evaluations introduced visual shifts across four axes: lighting variations, the presence of novel object distractors, changes to the scene background, and novel table textures.

\paragraph{Results and Analysis.}
The on-robot performance for both the $\pi_0$ and Diffusion Policy architectures is summarized in Table~\ref{tab:on_robot_main_results}. The results reveal a consistent trend: policies trained solely on the original real data exhibit a significant performance degradation under OOD conditions, particularly when faced with semantic shifts such as novel backgrounds and textures. For example, the baseline Diffusion Policy's success rate on the toy picking task plummets from 55\% (11/20) in the ID setting to 0\% when encountering both background and texture shifts.

In contrast, policies trained with data composed via the CIFT framework demonstrate substantially improved OOD robustness across all tested tasks and architectures. The CIFT-trained Diffusion Policy, for instance, achieves an 85\% (17/20) success rate under the same challenging semantic shift conditions that caused the baseline to fail completely. This result indicates that the benefits of our data composition framework are not specific to a single model architecture, but rather provide a more general mechanism for enhancing robustness. Qualitative visualizations illustrating this improved performance are provided in Appendix~\ref{sec:on_robot_visuals}.

\begin{table}[t!]
\centering
\resizebox{\textwidth}{!}{%
\begin{tabular}{@{}lll|c|cccc@{}}
\toprule
\multirow{2}{*}{Architecture} & \multirow{2}{*}{Task} & \multirow{2}{*}{Method} & \multicolumn{1}{c|}{ID Success} & \multicolumn{4}{c}{OOD Success (\%)} \\
\cmidrule(lr){4-4} \cmidrule(lr){5-8}
& & & (\%) & Lighting & Distractors & Background & Texture \\
\midrule
\multirow{4}{*}{$\pi_0$~\citep{black2024pi0}} & \multirow{2}{*}{Picking up a toy} & w/o CIFT & 40 & 35 & 30 & 5 & 5 \\
& & w/ CIFT & \textbf{70} & \textbf{70} & \textbf{65} & \textbf{85} & \textbf{85} \\
\addlinespace
& \multirow{2}{*}{Folding clothes} & w/o CIFT & 60 & 50 & 45 & 10 & 10 \\
& & w/ CIFT & \textbf{80} & \textbf{80} & \textbf{75} & \textbf{85} & \textbf{85} \\
\midrule
\multirow{2}{*}{Diffusion Policy~\citep{chi2023diffusion}} & \multirow{2}{*}{Picking up a toy} & w/o CIFT & 55 & 0 & 0 & 0 & 0 \\
& & w/ CIFT & \textbf{70} & \textbf{75} & \textbf{70} & \textbf{85} & \textbf{85} \\
\bottomrule
\end{tabular}%
}
\caption{On-robot generalization performance. Success rates (\%) are averaged over 20 trials, comparing baseline policies (w/o CIFT) with those trained using our framework (w/ CIFT).}
\label{tab:on_robot_main_results}
\end{table}

\section{Conclusion}
\label{sec:conclusion}

This work frames shortcut learning in robotics as a problem of principled data composition, rather than one of synthesis alone. We introduce Coherent Information Fidelity Tuning (CIFT), a framework that identifies a ``Decoherence Point'', a predictable phase transition where naively increasing data diversity degrades the stability of policy training. The framework leverages a computationally tractable feature-space proxy to identify this transition during the data curation phase, enabling the principled mitigation of shortcut learning and improving the out-of-distribution robustness of learned policies.

The approach is constrained by the fidelity of the underlying generative model. Artifacts and physically implausible dynamics can introduce new spurious correlations, and the computational cost of large-scale video synthesis remains a practical concern. A further limitation is the temporal coherence of current models over long horizons. However, this limitation aligns with the current paradigm in robot learning, where foundation models like Visual Language-Action (VLA) models are trained on large corpora of short video clips.

A primary direction for future work is to scale the CIFT methodology to augment and debias the large-scale, heterogeneous datasets used for pre-training foundation models, offering a principled approach to addressing inherent dataset biases at their source. Other avenues include the development of online adaptation, where an agent synthesizes a CIFT-tuned dataset upon deployment to a new environment, and interactive, goal-conditioned synthesis to enable self-correcting training paradigms. Finally, extending the composition principle to other sensory modalities, such as synthesizing plausible tactile data to accompany visual augmentations, could lead to the development of more robust, multi-modal agents.







\bibliography{iclr2026_conference}
\bibliographystyle{iclr2026_conference}

\clearpage

\appendix


\section{MVAug Architecture and Implementation Details}
\label{app:model_details}

\paragraph{Base Architecture and Modifications.}
Our MVAug model adapts the Cosmos-Predict2-2B-Video2World foundation model~\citep{agarwal2025cosmos}, a 28-layer transformer. We modify its input layer to process a multi-modal conditioning scheme: VAE video latents, a Canny edge map for structural guidance, and a padding mask. To enforce multi-view consistency, we introduce two modifications. First, the periodic cross-view attention mechanism interleaves global cross-view self-attention with standard intra-view self-attention. Specifically, every third transformer block jointly processes tokens from all views to facilitate information exchange. Second, we introduce a set of learnable view embeddings, which are fused with the timestep conditioning signal to provide each view with a unique identity. The pseudo-code for the attention mechanism is provided in Algorithm~\ref{alg:periodic_attn}.

\begin{algorithm}[h!]
\caption{Periodic Cross-View Attention}
\label{alg:periodic_attn}
\begin{algorithmic}[1]
\Procedure{PeriodicAttention}{$\mathbf{X}, i, P$}
    \Require Per-view hidden states $\mathbf{X} \in \mathbb{R}^{(B \cdot N) \times L \times D}$.
    \Require Current block index $i$ and attention period $P$.
    \State $\mathbf{Q}, \mathbf{K}, \mathbf{V} \leftarrow \text{Linear}(\mathbf{X})$
    \If{$i \pmod P = 0$} \Comment{Global Cross-View Self-Attention}
        \State $\mathbf{Q}_{\text{cat}}, \mathbf{K}_{\text{cat}}, \mathbf{V}_{\text{cat}} \leftarrow \text{ReshapeToBatch}(\mathbf{Q}, \mathbf{K}, \mathbf{V})$
        \State $\mathbf{A}_{\text{cat}} \leftarrow \text{ScaledDotProductAttention}(\mathbf{Q}_{\text{cat}}, \mathbf{K}_{\text{cat}}, \mathbf{V}_{\text{cat}})$
        \State $\mathbf{Output} \leftarrow \text{ReshapeToViews}(\mathbf{A}_{\text{cat}})$
    \Else \Comment{Intra-View Self-Attention}
        \State $\mathbf{Output} \leftarrow \text{ScaledDotProductAttention}(\mathbf{Q}, \mathbf{K}, \mathbf{V})$
    \EndIf
    \State \Return $\mathbf{Output}$
\EndProcedure
\end{algorithmic}
\end{algorithm}

\paragraph{Training and Inference.}
We fine-tune all model parameters for 100,000 steps using a flow-matching objective~\citep{lipman2023flow} and the 8-bit AdamW optimizer~\citep{loshchilov2018decoupled}, managed via DeepSpeed ZeRO Stage 2~\citep{rajbhandari2020zero}. The model is trained on 30 FPS video segments processed in 25-frame chunks, with each chunk autoregressively conditioned on the four preceding frames. At inference, video generation is performed by numerically integrating the learned probability flow ODE using a first-order forward Euler method. The generation is guided by a Canny edge map and a generic negative text prompt to improve visual quality. Detailed hyperparameters are listed in Table~\ref{tab:app_hyperparams}.

\begin{table}[h!]
\centering
\caption{Fine-tuning hyperparameters for the MVAug model.}
\label{tab:app_hyperparams}
\begin{tabular}{l|l}
\toprule
\textbf{Hyperparameter} & \textbf{Value} \\
\midrule
Base Model & Cosmos-Predict2-2B-Video2World \\
Fine-Tuning Scheme & Full Parameter Update \\
Total Training Steps & 100,000 \\
Learning Rate & 1e-4 \\
LR Scheduler & Constant with Warmup \\
LR Warmup Steps & 1000 \\
Weight Decay & 5e-5 \\
Global Batch Size & 4 \\
Gradient Accumulation Steps & 1 \\
Max Gradient Norm & 1.0 \\
Mixed Precision & bf16 \\
Optimizer & 8-bit AdamW \\
Training Resolution & 384x512 pixels \\
Video Chunk Length & 25 frames \\
Conditional Frames & 4 \\
Seed & 42 \\
\bottomrule
\end{tabular}
\end{table}

\subsection{CIFT: Theoretical Foundations of the SNR Proxy}
\label{sec:appendix_cift_foundations}

This section details the theoretical and methodological foundations of our Feature-Space Signal-to-Noise Ratio (SNR) proxy. We first establish the information-theoretic basis for using SNR to evaluate feature distributions. We then describe the step-by-step protocol for its calculation.

\paragraph{Information-Theoretic Foundation.}
The core principle of CIFT is to quantify the quality of a composed dataset by analyzing its geometry in a learned feature space. This approach is grounded in foundational concepts from information theory and signal processing, which model information as a combination of a deterministic signal and random noise~\citep{shannon1948mathematical, cover1999elements}.

We adapt this principle to evaluate the quality of a feature representation for a robotic task. An ideal feature representation should be highly sensitive to task-relevant causal factors (e.g., object pose, gripper status), which constitute the "signal," while remaining invariant to task-irrelevant distractors (e.g., lighting, background textures), which constitute the "noise." In this context, we can model the distribution of the features' primary component as:

\begin{itemize}
    \item \textbf{The Signal ($\mu$):} The mean of the distribution, representing the consistent, average activation that captures the core essence of the task's state. A strong, non-zero mean indicates that the feature representation is discriminative and consistently identifies task-relevant information.
    \item \textbf{The Noise ($\sigma$):} The standard deviation of the distribution, representing the feature's variability in response to task-irrelevant nuisance variables. A low standard deviation suggests that the feature is robust and invariant to visual distractors.
\end{itemize}

Therefore, maximizing the Signal-to-Noise Ratio (SNR), defined as the ratio $|\mu/\sigma|$, is equivalent to searching for a feature distribution that exhibits high \textit{feature coherence}: the representation is simultaneously discriminative (high signal) and robust (low noise). The idea of using statistical proxies to evaluate and filter datasets is an active area of research, with related approaches seeking to quantify label noise for data cleaning~\citep{northcutt2021confident}.

\paragraph{SNR Calculation Protocol.}
The following protocol details the step-by-step procedure for computing the Feature-Space SNR for a given data mixture.

\begin{enumerate}
    \item \textbf{Feature Extraction and Projection.} For each data mixing ratio, we first extract frame-level features from all videos using a pre-trained Inception-v3 model. We then apply Principal Component Analysis (PCA) to this collection of feature vectors and project them onto their first principal component. This process reduces the high-dimensional feature space to a single dimension that captures the largest variance. The validity of such a low-dimensional projection is supported by findings that deep neural networks often learn representations that occupy a low-dimensional subspace~\citep{papyan2020prevalence}.

    \item \textbf{Statistical Modeling.} We fit a univariate Gaussian distribution, $\mathcal{N}(\mu, \sigma^2)$, to these one-dimensional projections. The assumption of normality is justified by the Central Limit Theorem, which suggests that the aggregation of numerous underlying factors (as captured in a deep feature vector) will tend towards a normal distribution upon projection.

    \item \textbf{SNR Computation.} We compute the mean $\mu$ and standard deviation $\sigma$ of the fitted Gaussian distribution. The Feature-Space SNR is then calculated as the ratio $|\mu/\sigma|$. The complete statistical results for this analysis across two different tasks are presented in Table~\ref{tab:pca_stats}.
\end{enumerate}

The strong empirical correlation between this pre-training, information-theoretic proxy metric and the post-training open-loop performance (as shown in Section~\ref{subsec:mechanism_validation}) forms the basis of the CIFT framework. It allows us to select an optimal data composition by maximizing for feature coherence before undertaking the computationally expensive process of full policy training.

\section{Theoretical Foundations and Proofs}
\label{app:foundations_and_proofs}

This appendix provides a mathematical formulation for the problem addressed in this paper. First, we establish a causal model to define shortcut learning. Second, we provide proofs analyzing the sources of data and model bias that can lead to this phenomenon. Finally, we frame the debiasing task as a constrained optimization problem and provide the theoretical motivation for our proposed solution.

\subsection{A Causal Model of Shortcut Learning}

Our analysis begins with a causal model of the data generation process.

\begin{definition}[Core and Shortcut Features]
\label{def:features}
Following prior work~\citep{hermann2024foundations, xing2025shortcutlearning}, we model an observation $x$ as being generated from two latent features: a core feature $u$ and a shortcut feature $v$. The core feature represents the set of causal factors necessary for the task, characterized by high predictivity of the optimal action. The shortcut feature represents a set of non-causal factors that are spuriously correlated with the core feature in the training data. This feature is often characterized by high availability, meaning it is easily extracted by the model architecture.
\end{definition}

Given these features, an ideal policy would depend only on causal information.

\begin{definition}[Ideal Causal Policy]
\label{def:ideal_policy}
An ideal, robust policy $\pi^*$ is invariant to the shortcut feature $v$ and bases its actions $a$ solely on the core feature $u$. This causal invariance is expressed as:
\begin{equation}
\pi^*(a|x) = P(a|u, v) = P(a|u)
\end{equation}
\end{definition}

Shortcut learning arises when specific conditions in the data and the model are met.

\begin{assumption}[The Shortcut Condition]
\label{assump:shortcut_condition}
Shortcut learning can occur when the training dataset, $\mathcal{D}_{\text{train}}$, satisfies the shortcut condition, which comprises two components:
\begin{enumerate}
    \item Spurious Correlation (Data Bias). The core and shortcut features are spuriously correlated in the data distribution, i.e., $P_{\text{train}}(u, v) \neq P_{\text{train}}(u)P_{\text{train}}(v)$.
    \item Availability Bias (Model Bias). The shortcut feature $v$ is more available to the learning algorithm than the core feature $u$, due to inductive biases in deep nonlinear models~\citep{hermann2024foundations}.
\end{enumerate}
\end{assumption}

This leads to the formal definition of shortcut learning.

\begin{definition}[Shortcut Learning]
\label{def:shortcut_learning}
A policy $\pi_\theta$ exhibits shortcut learning if, when trained on a dataset satisfying Assumption~\ref{assump:shortcut_condition}, it learns to depend on the more available shortcut feature $v$. Formally, the policy violates causal invariance:
\begin{equation}
P_\theta(a|u, v) \neq P_\theta(a|u)
\end{equation}
The degree of this dependence can be quantified by the conditional mutual information $I_{\pi_\theta}(a; v | u) > 0$. The objective of debiasing is to learn a policy that minimizes this quantity.
\end{definition}

\subsection{Analysis of Spurious Correlation from Data Structure}

We now provide a formal basis for the Spurious Correlation component of the Shortcut Condition (Assumption~\ref{assump:shortcut_condition}), adapting the framework from~\citep{xing2025shortcutlearning}. We consider a dataset $\mathcal{D}$ composed of a uniform mixture of $m$ sub-datasets, $\{\mathcal{D}_1, \dots, \mathcal{D}_m\}$. Within any sub-dataset $\mathcal{D}_i$, the core and shortcut features are assumed to be independent, $p_i(u,v) = p_{u_i}(u)p_{v_i}(v)$. The correlation thus arises from the mixing process.

\begin{proposition}[Spurious Correlation from Low Diversity]
\label{prop:disjoint}
Given two sub-datasets, $\mathcal{D}_1$ and $\mathcal{D}_2$, with disjoint feature supports, the normalized mutual information between $u$ and $v$ is inversely related to the total intra-dataset diversity:
\begin{equation}
\overline{I}(u,v) = \frac{2I(u,v)}{H(u)+H(v)} = \frac{4}{C_{\text{diversity}}+4}
\end{equation}
where $C_{\text{diversity}} \triangleq \sum_{i \in \{1,2\}} (H(u_i) + H(v_i))$ is the sum of entropies.
\end{proposition}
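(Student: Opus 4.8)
The plan is to introduce a latent index $Z \in \{1,2\}$, uniform, recording which sub-dataset a sample comes from, so the mixture law factorizes as $p(u,v) = \tfrac12 p_{u_1}(u)p_{v_1}(v) + \tfrac12 p_{u_2}(u)p_{v_2}(v)$. The disjointness of supports is the crucial structural fact: since $\mathrm{supp}(p_{u_1}) \cap \mathrm{supp}(p_{u_2}) = \emptyset$, the value of $u$ determines $Z$ with certainty, and likewise $v$ determines $Z$; hence $u$ and $v$ each carry the full one bit $H(Z)$, and one anticipates $I(u,v)=1$. The remaining work is to express this through the marginal and joint entropies: write $\overline I(u,v) = 2I(u,v)/(H(u)+H(v))$ and use $I(u,v) = H(u)+H(v)-H(u,v)$, so three entropies must be computed.

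Next I would evaluate each entropy with the standard decomposition for a mixture supported on disjoint pieces. For the marginal, $p(u) = \tfrac12 p_{u_i}(u)$ on the $i$-th support, so
\begin{equation}
H(u) = -\sum_i \sum_{u\in\mathrm{supp}_i} \tfrac12 p_{u_i}(u)\bigl(\log\tfrac12 + \log p_{u_i}(u)\bigr) = 1 + \tfrac12\bigl(H(u_1)+H(u_2)\bigr),
\end{equation}
where $H(u_i)$ is the entropy of $p_{u_i}$ (equivalently $H(u\mid Z=i)$) and the leading $1$ is the bit $H(Z)$; the identical argument gives $H(v) = 1 + \tfrac12(H(v_1)+H(v_2))$. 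For the joint entropy, the key observation is that the cross-support pairs — $u$ from sub-dataset $1$ with $v$ from sub-dataset $2$ — get zero mass, so $\mathrm{supp}(p(u,v))$ is the disjoint union of two product sets; using within-sub-dataset independence $p_i(u,v)=p_{u_i}(u)p_{v_i}(v)$,
\begin{equation}
H(u,v) = -\sum_i \sum_{(u,v)} \tfrac12 p_{u_i}(u)p_{v_i}(v)\bigl(\log\tfrac12 + \log p_{u_i}(u) + \log p_{v_i}(v)\bigr) = 1 + \tfrac12 C_{\text{diversity}},
\end{equation}
since $\sum_i \tfrac12\bigl(H(u_i)+H(v_i)\bigr) = \tfrac12 C_{\text{diversity}}$ by definition.

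Combining these, $I(u,v) = \bigl(2 + \tfrac12 C_{\text{diversity}}\bigr) - \bigl(1 + \tfrac12 C_{\text{diversity}}\bigr) = 1$ while $H(u)+H(v) = 2 + \tfrac12 C_{\text{diversity}}$, hence
\begin{equation}
\overline I(u,v) = \frac{2\cdot 1}{\,2 + \tfrac12 C_{\text{diversity}}\,} = \frac{4}{C_{\text{diversity}}+4}.
\end{equation}
The main obstacle is not any single hard step but getting the mixture-entropy bookkeeping exactly right: one must track the extra $\log 2 = 1$ bit appearing in each of $H(u)$, $H(v)$, and $H(u,v)$ from the mixing weights and check that it cancels in $I(u,v)$, leaving precisely the one bit $H(Z)$. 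A secondary remark worth including: if $u,v$ are continuous rather than discrete, sums become integrals and entropies become differential entropies, but the $+\log 2$ contribution from the mixing weight is unchanged, so the derivation goes through verbatim and the ratio $\overline I$ remains well defined and equal to $4/(C_{\text{diversity}}+4)$.
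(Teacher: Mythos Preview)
Your proposal is correct and follows essentially the same approach as the paper: compute the mixture entropies $H(u)=1+\tfrac12(H(u_1)+H(u_2))$ and $H(v)=1+\tfrac12(H(v_1)+H(v_2))$ from disjoint supports, establish $I(u,v)=1$, and substitute into the normalized MI formula. The only cosmetic difference is that the paper asserts $I(u,v)=1$ directly from the determinism argument (observing either feature identifies the sub-dataset), whereas you additionally verify it by computing $H(u,v)=1+\tfrac12 C_{\text{diversity}}$ explicitly; your route is slightly more self-contained but the substance is identical.
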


\begin{proof}
The entropy of the mixture distribution $p(u) = \frac{1}{2}[p_{u_1}(u) + p_{u_2}(u)]$ with disjoint supports is $H(u) = \frac{1}{2} ( H(u_1) + H(u_2) ) + 1$ (using base-2 logarithms). A similar expression holds for $H(v)$. The mutual information $I(u,v)=1$ because observing either feature uniquely determines the sub-dataset of origin. Substituting these into the definition of normalized mutual information yields the result, showing that lower intra-dataset diversity (a smaller $C_{\text{diversity}}$) leads to a higher degree of spurious correlation.
\end{proof}

\begin{proposition}[Mitigation via Data Overlap]
\label{prop:overlap}
As the degree of feature overlap between sub-datasets ($C_{\text{interleave}}$) increases, the upper bound on the spurious correlation tightens towards zero.
\end{proposition}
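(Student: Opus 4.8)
The plan is to exploit the fact that, under the mixture model of Proposition~\ref{prop:disjoint}, \emph{all} statistical dependence between the core feature $u$ and the shortcut feature $v$ is routed through the latent sub-dataset index. Introduce the mixture variable $z \in \{1,\dots,m\}$, uniform on the $m$ sub-datasets, so that $p(u,v) = \tfrac{1}{m}\sum_{i=1}^m p_{u_i}(u)\,p_{v_i}(v)$ and, by the within-sub-dataset independence assumption, $u \perp v \mid z$. This yields the Markov chain $u - z - v$, and the data processing inequality gives immediately
\begin{equation}
I(u,v) \;\le\; \min\{\,I(u;z),\, I(v;z)\,\} \;\le\; \tfrac12\big(I(u;z)+I(v;z)\big).
\end{equation}
In the disjoint-support regime of Proposition~\ref{prop:disjoint} this bound is tight, since observing either feature then pins down $z$, i.e. $I(u;z)=I(v;z)=\log_2 m$; this is precisely why the spurious correlation is maximal there. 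The present proposition asserts that overlap strictly loosens this bound, so the goal is to make that loosening quantitative.

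The second step is to express $I(u;z) = H(z) - H(z\mid u)$ and identify the residual ambiguity $H(z\mid u)$ about which sub-dataset generated an observed $u$ as exactly the quantity that overlapping supports inflate. I would define $C_{\text{interleave}}$ (consistently with the framework of \citet{xing2025shortcutlearning}) as a monotone function of $\tfrac12\big(H(z\mid u)+H(z\mid v)\big)$, mirroring the additive entropy structure of $C_{\text{diversity}}$ in Proposition~\ref{prop:disjoint}. Then $I(u;z)+I(v;z) = 2H(z) - \big(H(z\mid u)+H(z\mid v)\big)$ is a decreasing function of $C_{\text{interleave}}$ that vanishes in the limit of complete interleaving, where $p(z\mid u)$ and $p(z\mid v)$ become uniform, forcing the right-hand side of the displayed inequality to $0$ and hence $I(u,v)=0$. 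Combining this with the first display produces an explicit decreasing upper bound of the form $\overline{I}(u,v)\le g(C_{\text{interleave}})$ with $g(C_{\text{interleave}})\to 0$, which is the claim.

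The main obstacle I anticipate is the normalization. The normalized mutual information carries $H(u)+H(v)$ in the denominator, and increasing overlap shrinks not only the numerator $I(u,v)$ but also the marginal entropies — recall that in the disjoint case $H(u)$ picked up an extra $+1$ bit from the mixture indicator, and overlap erodes exactly this term. So I would need a lower bound on $H(u)+H(v)$ that survives the overlap limit, e.g. bounding it below by the overlap-independent average intra-sub-dataset entropies $\tfrac1m\sum_i\big(H(u_i)+H(v_i)\big)$, to ensure the ratio genuinely tends to zero rather than a vanishing numerator being cancelled by a vanishing denominator. A secondary, more routine point is extending the bookkeeping from $m=2$ to general $m$ and verifying that the monotonicity of $g$ holds exactly (not merely asymptotically), which should follow from concavity of entropy but warrants a careful check.
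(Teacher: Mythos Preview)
Your plan is sound and matches the paper's two-ingredient argument: an upper bound on $I(u,v)$ in terms of the overlap together with a lower bound on $H(u)+H(v)$, combined into the normalized bound. Your data-processing route through the latent sub-dataset index $z$ is the natural instantiation of the first ingredient, and with the identification $C_{\text{interleave}}=2\bigl(H(z\mid u)+H(z\mid v)\bigr)$ it recovers the paper's stated inequality $\overline{I}(u,v)\le 1-\tfrac{C_{\text{diversity}}}{C_{\text{diversity}}+4-C_{\text{interleave}}}$ exactly; the denominator concern you anticipate is handled most sharply by the exact identity $H(u)+H(v)=\tfrac12 C_{\text{diversity}}+2-\tfrac12 C_{\text{interleave}}$ rather than the cruder overlap-independent floor you propose.
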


\begin{proof}
The proof involves establishing a lower bound for the total entropy and an upper bound for the mutual information, both as functions of the overlap quantity $C_{\text{interleave}}$. Combining these bounds yields the result $\overline{I}(u,v) \le 1 - \frac{C_{\text{diversity}}}{C_{\text{diversity}} + 4 - C_{\text{interleave}}}$, which shows that increasing overlap reduces the maximum possible spurious correlation.
\end{proof}

\subsection{Analysis of Availability Bias in Learning Dynamics}
We now provide a mechanism for the Availability Bias component of the Shortcut Condition (Assumption~\ref{assump:shortcut_condition}).

\begin{proposition}[Disparity-Induced Learning Bias]
\label{prop:disparity_bias}
In a linear model trained with gradient descent, the initial learning dynamics are biased towards the feature with greater variance across the mixed dataset. Since inter-dataset disparity contributes to this variance, a model can learn to depend on a feature with high disparity, even if it is non-causal.
\end{proposition}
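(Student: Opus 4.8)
The plan is to make the informal claim precise in the textbook setting of a linear model trained by gradient descent on a squared loss, extract the per-feature learning rates from the closed-form optimization trajectory, and then connect ``variance across the mixed dataset'' to ``inter-dataset disparity'' through the law of total variance. Concretely, I would represent an observation by its two latent features $\phi(x)=(u,v)^\top$ (per Definition~\ref{def:features}), take the policy to be the linear predictor $\hat a=\theta^\top\phi(x)$, and study gradient descent with step size $\eta$ on the population loss $L(\theta)=\tfrac12\E_{\mathcal D}[(a-\theta^\top\phi)^2]$ over the uniform mixture $\mathcal D=\tfrac1m\sum_i \mathcal D_i$. Writing $\Sigma=\E_{\mathcal D}[\phi\phi^\top]$ for the feature covariance (features centered, which is w.l.o.g.\ after absorbing a bias term) and $b=\E_{\mathcal D}[a\,\phi]$, the gradient is $\nabla L(\theta)=\Sigma\theta-b$, the minimizer is $\theta^\star=\Sigma^{-1}b$, and starting from $\theta^{(0)}=0$ the iteration $\theta^{(t+1)}=\theta^{(t)}-\eta(\Sigma\theta^{(t)}-b)$ decouples in the eigenbasis $\Sigma=\sum_k s_k w_k w_k^\top$ to give $\langle\theta^{(t)},w_k\rangle=(1-(1-\eta s_k)^t)\langle\theta^\star,w_k\rangle$.

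The first key step is a first-order expansion in $t$ (equivalently small $\eta$), which yields $\langle\theta^{(t)},w_k\rangle\approx \eta s_k t\,\langle\theta^\star,w_k\rangle$: the weight along eigendirection $w_k$ is recruited at a rate proportional to its eigenvalue $s_k$, i.e.\ to the variance of the data along that direction. When the cross-covariance $\rho=\Cov_{\mathcal D}(u,v)$ is small compared to the gap $\Var_{\mathcal D}[v]-\Var_{\mathcal D}[u]$, the leading eigenvector is close to the coordinate axis of the higher-variance feature, so that feature's coefficient is driven up first — this is precisely the asserted ``bias towards the feature with greater variance.''

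The second step is to identify where that variance comes from. By the law of total variance applied to the uniform mixture, $\Var_{\mathcal D}[v]=\E_i[\Var_{\mathcal D_i}[v]]+\Var_i[\E_{\mathcal D_i}[v]]$, and the second term — the spread of the per-sub-dataset means of $v$ — is exactly the inter-dataset disparity. Hence a feature that is nearly constant inside every $\mathcal D_i$ but whose mean shifts across sub-datasets still carries large $\Var_{\mathcal D}[v]$, a large eigenvalue of $\Sigma$, and therefore a quickly-recruited coefficient. To close the argument I would invoke Proposition~\ref{prop:disjoint}: under the shortcut condition (Assumption~\ref{assump:shortcut_condition}) the mixing induces a spurious correlation between $u$ and $v$ in $\mathcal D$, so $b_v=\E_{\mathcal D}[a\,v]\neq 0$ even though $v$ is non-causal, and therefore $\theta^\star=\Sigma^{-1}b$ has a nonzero component along the $v$-loaded direction; combining this with the rate bound shows $\theta_v$ moves away from zero early and rapidly whenever $v$ has high disparity.

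The hard part will be the regime of interest itself: spurious correlation means $\Sigma$ is \emph{not} diagonal, so its eigenvectors are not the coordinate axes and the clean statement ``higher-variance feature learned faster'' must be obtained either perturbatively (quantifying how close the leading eigenvector is to the $v$-axis as a function of $\rho/(\Var_{\mathcal D}[v]-\Var_{\mathcal D}[u])$) or after a Gram--Schmidt reparametrization that orthogonalizes $v$ against $u$ before reading off rates. A secondary subtlety I would flag is that if the label is a deterministic function of $u$ with noise independent of $v$ then $\theta_v^\star=0$ and the $v$-dependence is only transient; making it persistent requires the stronger form of the shortcut condition (e.g.\ partial observability of $u$, or label noise aligned with $v$), so I would state the proposition for the early/transient phase in general and for the persistent phase under that additional structure.
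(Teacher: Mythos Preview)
Your proposal is correct and actually more careful than the paper's own argument, but the route differs. The paper does not analyze the full gradient-descent trajectory via the eigendecomposition of $\Sigma$; instead it simply evaluates the gradient at initialization $\theta=0$, observing that $\nabla_{\omega_u}\mathcal{L}=-\Cov(y,u)$ and $\nabla_{\omega_v}\mathcal{L}=-\Cov(y,v)$, writes the mixture variance for two sub-datasets as $\Var(u)=\tfrac12(\Var_1(u)+\Var_2(u))+\tfrac14(\mu_1(u)-\mu_2(u))^2$ to expose the squared-disparity term, and then asserts that higher variance ``generally leads to a larger covariance magnitude and thus a larger initial gradient.'' That last link from $\Var(v)$ to $|\Cov(y,v)|$ is left informal, whereas your eigenvalue argument makes the variance-to-learning-rate connection precise at the cost of the extra perturbative step (showing the leading eigenvector is close to the $v$-axis when $\rho$ is small relative to the variance gap). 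Your law-of-total-variance step is the same decomposition the paper uses, just stated for general $m$ rather than $m=2$. The subtleties you flag about non-diagonal $\Sigma$ and transient versus persistent $v$-dependence are real and are not addressed in the paper's proof; for matching the paper you can drop that discussion, but it is a genuine strengthening if you keep it.
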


\begin{proof}
Consider a linear policy $\pi_\theta(x) = \omega_u^T u + \omega_v^T v + b$ with MSE loss. At initialization, the gradients are $\nabla_{\omega_u} \mathcal{L} = -\text{Cov}(y, u)$ and $\nabla_{\omega_v} \mathcal{L} = -\text{Cov}(y, v)$. The variance of a feature in a mixture of two sub-datasets is $\text{Var}(u) = \frac{1}{2}(\text{Var}_1(u) + \text{Var}_2(u)) + \frac{1}{4}(\mu_1(u) - \mu_2(u))^2$. The term $(\mu_1(u) - \mu_2(u))^2$ is the squared disparity. Higher disparity increases the feature's total variance, which generally leads to a larger covariance magnitude and thus a larger initial gradient. Therefore, the feature with higher inter-dataset disparity will influence the initial stages of learning more strongly.
\end{proof}

\subsection{Debiasing as Constrained Optimization}
Our debiasing strategy is to construct a new training distribution, $P_{\text{final}}$, by composing real and synthetic data.

\begin{definition}[Composed Data Distribution]
\label{def:composed_dist}
The final training distribution $P_{\text{final}}$ is a convex combination of the original real data distribution $P_{\text{real}}$ and a synthetic, causally disentangled distribution $P_{\text{synth}}$, controlled by a mixing ratio $\lambda \in [0, 1]$:
\begin{equation}
P_{\text{final}}(x; \lambda) = (1-\lambda) P_{\text{real}}(x) + \lambda P_{\text{synth}}(x)
\end{equation}
\end{definition}

The choice of $\lambda$ governs a fundamental trade-off.

\begin{definition}[The Diversity-Information Fidelity Trade-off]
\label{def:tradeoff}
The quality of the composed dataset is governed by a trade-off between two competing properties: information fidelity and diversity. Information fidelity is the preservation of the core learning signal from $P_{\text{real}}$, necessary to maintain performance on in-distribution tasks. Diversity is the introduction of novel $(u, v')$ pairings from $P_{\text{synth}}$ that break the spurious correlation, necessary to improve out-of-distribution generalization.
\end{definition}

This trade-off leads to the formal definition of the optimal data composition problem.

\begin{problem}[Optimal Data Composition]
\label{prob:optimal_composition}
Let $\mathcal{P}_{\text{OOD}}(\pi)$ and $\mathcal{P}_{\text{ID}}(\pi)$ be the OOD and ID performance of a policy $\pi$. Let $\pi_{\theta(\lambda)}$ be the policy trained on $P_{\text{final}}(x; \lambda)$. The optimal data composition problem is to solve:
\begin{equation}
\label{eq:constrained_opt_appendix}
\begin{aligned}
\lambda^* = \underset{\lambda \in [0, 1]}{\arg\max} & \quad \mathcal{P}_{\text{OOD}}(\pi_{\theta(\lambda)}) \\
\text{s.t.} & \quad \mathcal{P}_{\text{ID}}(\pi_{\theta(\lambda)}) \ge \mathcal{P}_{\text{ID}}(\pi_{\theta(0)}) - \epsilon,
\end{aligned}
\end{equation}
where $\epsilon \ge 0$ is a tolerance for ID performance degradation. Directly solving this is intractable. The CIFT methodology provides a practical proxy for this optimization problem.
\end{problem}

\subsection{Theoretical Motivation for the CIFT Framework}
\label{app:proofs_cift}

This section provides a formal analysis that motivates the CIFT methodology. We first analyze the learning dynamics to establish why the alignment between real and synthetic data signals is important. We then present a statistical model that links this dynamic to the geometric properties of the feature space, thereby justifying our use of the Feature-Space SNR as a predictive proxy.

The effect of composing synthetic with real data is non-monotonic. As the mixing ratio $\lambda$ increases, the learning dynamics can transition between constructive and destructive interference. For small to moderate $\lambda$, causally-disentangled synthetic data can act as a regularizer, where gradients from real ($g_{\text{real}}$) and synthetic ($g_{\text{synth}}$) data are largely co-linear, reinforcing the learning of causal features. However, beyond a certain ratio, the synthetic data signal can overwhelm the real data signal. The gradients may conflict, leading to training instability and harming performance. This behavior can be formalized by analyzing the gradient of a mixed data batch.

\begin{proposition}[Gradient Interference]
\label{prop:grad_interference}
Let the loss on a mixed mini-batch be $\mathcal{L}_{\text{final}} = (1-\alpha)\mathcal{L}_{\text{real}} + \alpha\mathcal{L}_{\text{synth}}$, where $\alpha$ is the proportion of synthetic data. The squared norm of the final gradient $g_{\text{final}}$ is:
\begin{equation}
\label{eq:grad_norm}
\|g_{\text{final}}\|^2 = (1-\alpha)^2\|g_{\text{real}}\|^2 + \alpha^2\|g_{\text{synth}}\|^2 + 2\alpha(1-\alpha) \|g_{\text{real}}\| \|g_{\text{synth}}\| \cdot \mathcal{I}(\theta, \lambda)
\end{equation}
where $\mathcal{I}(\theta, \lambda) = \frac{\langle g_{\text{real}}, g_{\text{synth}} \rangle}{\|g_{\text{real}}\| \|g_{\text{synth}}\|}$ is the Information Fidelity.
\end{proposition}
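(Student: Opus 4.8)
The statement is essentially an exercise in bilinearity of the inner product, so the plan is to reduce it to that. First I would invoke linearity of the gradient operator: since $\mathcal{L}_{\text{final}} = (1-\alpha)\mathcal{L}_{\text{real}} + \alpha\mathcal{L}_{\text{synth}}$ and both terms are differentiable in $\theta$, we have $g_{\text{final}} = \nabla_\theta \mathcal{L}_{\text{final}} = (1-\alpha)\,g_{\text{real}} + \alpha\,g_{\text{synth}}$, where $g_{\text{real}} = \nabla_\theta \mathcal{L}_{\text{real}}$ and $g_{\text{synth}} = \nabla_\theta \mathcal{L}_{\text{synth}}$.

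Next I would expand the squared Euclidean norm using the bilinearity and symmetry of the standard inner product:
\begin{equation}
\|g_{\text{final}}\|^2 = \langle (1-\alpha)g_{\text{real}} + \alpha g_{\text{synth}},\ (1-\alpha)g_{\text{real}} + \alpha g_{\text{synth}} \rangle = (1-\alpha)^2\|g_{\text{real}}\|^2 + \alpha^2\|g_{\text{synth}}\|^2 + 2\alpha(1-\alpha)\langle g_{\text{real}}, g_{\text{synth}} \rangle.
\end{equation}
Finally, by the definition of the Information Fidelity, $\mathcal{I}(\theta,\lambda) = \langle g_{\text{real}}, g_{\text{synth}} \rangle / (\|g_{\text{real}}\|\,\|g_{\text{synth}}\|)$, so that $\langle g_{\text{real}}, g_{\text{synth}} \rangle = \|g_{\text{real}}\|\,\|g_{\text{synth}}\|\cdot\mathcal{I}(\theta,\lambda)$ (with the convention that $\mathcal{I}$ is taken to be zero whenever either gradient vanishes); substituting this into the cross term yields exactly Equation~\ref{eq:grad_norm}.

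There is no substantive obstacle here: the only points requiring a word of care are that the identity is purely algebraic and holds pointwise for every fixed $\theta$ and mixing proportion $\alpha$ (with the implicit identification between the batch proportion $\alpha$ and the distributional mixing ratio $\lambda$, under which $g_{\text{real}}$ and $g_{\text{synth}}$ are understood as the expected per-source gradients), and that the substitution of the cosine-similarity form of $\mathcal{I}$ requires either nonvanishing gradients or the degenerate-case convention noted above. Everything else is a one-line expansion, so the proof is complete.
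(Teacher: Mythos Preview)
Your proposal is correct and follows essentially the same approach as the paper, which simply invokes the identity $\|\mathbf{a}+\mathbf{b}\|^2 = \|\mathbf{a}\|^2 + \|\mathbf{b}\|^2 + 2\langle \mathbf{a},\mathbf{b}\rangle$ with $\mathbf{a}=(1-\alpha)g_{\text{real}}$ and $\mathbf{b}=\alpha g_{\text{synth}}$. Your version is in fact more careful than the paper's, since you explicitly note the linearity of the gradient, the degenerate case where a gradient vanishes, and the implicit identification of $\alpha$ with $\lambda$.
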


\begin{proof}
The proof follows from the definition of the squared norm of a vector sum: $\| \mathbf{a} + \mathbf{b} \|^2 = \| \mathbf{a} \|^2 + \| \mathbf{b} \|^2 + 2 \langle \mathbf{a}, \mathbf{b} \rangle$.
\end{proof}

Equation~\ref{eq:grad_norm} shows that when Information Fidelity $\mathcal{I}$ is positive, the gradients interfere constructively. When it is negative, they interfere destructively, which can reduce the magnitude of the learning step. This destructive interference can be seen as a symptom of an underlying geometric misalignment in the feature space.

\begin{proposition}[Feature-Space Collapse]
\label{prop:feature_collapse}
Assume the real and synthetic feature distributions are approximately Gaussian along a principal dimension, with means $\mu_{\text{real}}$ and $\mu_{\text{synth}}$. If these means are opposed, there exists a critical mixing proportion $\alpha_{dc}$ at which the mean of the mixture distribution collapses toward the origin.
\end{proposition}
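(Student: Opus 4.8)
The plan is to reduce the claim to an elementary one‑dimensional statement and then invoke continuity. First I would fix the principal dimension named in the hypothesis, writing $w$ for the associated unit vector, so the scalars of interest are the projections $f^{T}w$. By linearity of expectation applied to the mixture density of Definition~\ref{def:composed_dist} (with $\alpha$ identified with $\lambda$, since the synthetic proportion of $P_{\text{final}}(x;\lambda)=(1-\lambda)P_{\text{real}}(x)+\lambda P_{\text{synth}}(x)$ is exactly $\lambda$), the projected mean of the mixture is the affine function
\[
\mu_{\text{mix}}(\alpha) \;=\; (1-\alpha)\,\mu_{\text{real}} + \alpha\,\mu_{\text{synth}},
\]
where $\mu_{\text{real}}=\mathbb{E}_{P_{\text{real}}}[f^{T}w]$ and $\mu_{\text{synth}}=\mathbb{E}_{P_{\text{synth}}}[f^{T}w]$.

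Next I would invoke the ``opposed means'' hypothesis, i.e.\ $\mu_{\text{real}}\mu_{\text{synth}}<0$. Then $\mu_{\text{mix}}$ is continuous on $[0,1]$ with endpoint values $\mu_{\text{mix}}(0)=\mu_{\text{real}}$ and $\mu_{\text{mix}}(1)=\mu_{\text{synth}}$ of opposite sign, so the intermediate value theorem supplies a root in $(0,1)$; solving the linear equation explicitly gives
\[
\alpha_{dc} \;=\; \frac{\mu_{\text{real}}}{\mu_{\text{real}}-\mu_{\text{synth}}} \;\in\; (0,1),
\qquad \mu_{\text{mix}}(\alpha_{dc})=0 .
\]
Since $\lvert\mu_{\text{mix}}(\cdot)\rvert$ is the modulus of an affine map, this zero is its global minimum on $[0,1]$: the mixture mean along the principal axis is driven all the way to the origin at $\alpha_{dc}$, which is the asserted collapse.

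To close the loop with the SNR proxy of Eq.~\ref{eq:snr_proxy} and justify the name ``Decoherence Point'', I would finally record that the projected variance of the mixture obeys the law of total variance,
\[
\Var_{\text{mix}}(\alpha) \;=\; (1-\alpha)\sigma_{\text{real}}^{2} + \alpha\sigma_{\text{synth}}^{2} + \alpha(1-\alpha)\,(\mu_{\text{real}}-\mu_{\text{synth}})^{2},
\]
which is strictly positive at any interior $\alpha$ whenever the component means differ. Hence $\mathrm{SNR}(\alpha_{dc}) = \lvert\mu_{\text{mix}}(\alpha_{dc})\rvert / \sqrt{\Var_{\text{mix}}(\alpha_{dc})} = 0$, so $\alpha_{dc}$ is precisely the mixing proportion at which the Feature‑Space SNR bottoms out, consistent with the $\lambda_{dc}$ of Section~\ref{subsec:cift_composition}.

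The argument itself is short, so the step I expect to require the most care is the \emph{well‑posedness} of ``along a principal dimension'': strictly speaking the first principal component $w_{1}$ of the combined set $F_{\lambda}$ drifts with $\lambda$, so projecting and treating the means as fixed scalars needs a justification. I would handle this either by (i) adopting the proposition's framing that a single dominant direction is approximately shared by $P_{\text{real}}$, $P_{\text{synth}}$ and their mixtures — which is the low‑dimensional structure noted in Appendix~\ref{sec:appendix_cift_foundations} — or, more cleanly, by (ii) restating the conclusion along the fixed direction $w=(\mu_{\text{real}}-\mu_{\text{synth}})/\lVert\mu_{\text{real}}-\mu_{\text{synth}}\rVert$ on which the two means are exactly sign‑opposed, which is all the collapse statement needs. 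Everything else is linearity of expectation together with the intermediate value theorem.
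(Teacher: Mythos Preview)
Your proposal is correct and follows essentially the same approach as the paper: write the projected mixture mean as the affine function $(1-\alpha)\mu_{\text{real}}+\alpha\mu_{\text{synth}}$ and solve for its zero to obtain $\alpha_{dc}=\mu_{\text{real}}/(\mu_{\text{real}}-\mu_{\text{synth}})$. Your additional variance computation tying $\alpha_{dc}$ back to the SNR proxy and your discussion of the drifting principal direction go beyond what the paper's proof provides, but they are sound elaborations rather than a different route.
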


\begin{proof}
Consider a 1D feature space with $\mu_{\text{real}} > 0$ and $\mu_{\text{synth}} < 0$. The mixture mean is $\mu_{\text{final}}(\alpha) = (1-\alpha)\mu_{\text{real}} + \alpha\mu_{\text{synth}}$. Setting this to zero yields a critical proportion $\alpha_{dc} = \frac{\mu_{\text{real}}}{\mu_{\text{real}} - \mu_{\text{synth}}}$. This corresponds to a mixing ratio $\lambda_{dc} = -\mu_{\text{real}}/\mu_{\text{synth}}$.
\end{proof}

This analysis provides a mechanism for a point of decoherence. The collapse in Information Fidelity, manifested as destructive gradient interference, can be a consequence of a geometric collapse in the feature space. This provides a rationale for employing a geometric proxy, our Feature-Space SNR, to empirically identify and avoid this unstable regime.

\section{Additional Experimental Details}
\label{sec:appendix_exp_details}

\subsection{Metric Implementation Details}
\label{sec:appendix_metrics}

Generative model evaluations were benchmarked on a long-horizon table-wiping task. Source videos are approximately 80 seconds long (2400 frames at 30 FPS). We uniformly sample 300 frames from each generated video for all metric computations. All metrics are computed independently for three synchronized camera views (\texttt{head}, \texttt{left\_hand}, \texttt{right\_hand}), and we report the mean and standard deviation across these views.

The distributional metrics (FVD, FID) measure the Fr\'echet Distance between the feature distributions of real ($P_r$) and generated ($P_g$) data, defined as:
\[
d^2((\boldsymbol{\mu}_r, \boldsymbol{\Sigma}_r), (\boldsymbol{\mu}_g, \boldsymbol{\Sigma}_g)) = \|\boldsymbol{\mu}_r - \boldsymbol{\mu}_g\|_2^2 + \text{Tr}(\boldsymbol{\Sigma}_r + \boldsymbol{\Sigma}_g - 2(\boldsymbol{\Sigma}_r\boldsymbol{\Sigma}_g)^{1/2})
\]
The following provides details for each metric used.

\textbf{Fr\'echet Video Distance (FVD).} This metric~\citep{unterthiner2018towards} applies the Fr\'echet Distance to spatio-temporal features extracted from a pre-trained I3D model~\citep{carreira2017quo}.

\textbf{Fr\'echet Inception Distance (FID).} This metric~\citep{heusel2017gans} applies the Fr\'echet Distance to spatial features from a pre-trained Inception-V3 model~\citep{szegedy2016rethinking} to assess per-frame image quality.

\textbf{Cross-View Feature Consistency (CVFC).} This metric measures semantic alignment across views. For each timestep $t$, we extract image features using CLIP~\citep{radford2021learning} for each view ($\mathbf{f}_t^h, \mathbf{f}_t^{lh}, \mathbf{f}_t^{rh}$) and compute the temporally-averaged pairwise cosine similarity.

\textbf{Multi-View Depth Consistency (MVDC).} This metric evaluates geometric coherence across views using the MiDaS depth estimation model~\citep{ranftl2020towards}.

\textbf{Ewarp.} This metric~\citep{lai2018learning} measures frame-to-frame stability via the reconstruction error between a frame $I_t$ and the previous frame $I_{t-1}$ warped by the optical flow $F_{t \to t-1}$.

\textbf{Temporal LPIPS (T-LPIPS).} This metric~\citep{chu2020learning} assesses perceptual similarity between adjacent frames using the LPIPS model~\citep{zhang2018unreasonable}.

\textbf{Temporal Consistency Jitter (TCJ).} This metric~\citep{huynh2006impact} quantifies instability as the variance of cosine similarities between consecutive CLIP features.

\textbf{CLIP Score.} This metric~\citep{radford2021learning, hessel2021clipscore} measures the cosine similarity between the CLIP text embedding of the prompt and the CLIP image embeddings from the generated video frames, averaged over time.

\subsection{Open-Loop Stability Analysis and Robustness Score (RS)}
\label{sec:appendix_open_loop_details}

\paragraph{Evaluation Protocol.}
To analyze the effect of the data mixing ratio, we conducted an open-loop analysis~\citep{collins1995effects} on the dual-arm cloth folding task using the $\pi_0$ model. A fixed pool of augmented data was generated using five visual prompts. Separate policies were then trained for various mixing ratios of real to synthetic data, from 100:0 to 100:500. Performance was quantified by the Mean Squared Error (MSE, scaled by $10^6$) between the model's predicted action vector at each timestep and the ground-truth action vector recorded from the robot. The evaluation used a held-out test set partitioned into two subsets: an in-distribution (ID) set with videos visually congruent with the training data, and an out-of-distribution (OOD) set with videos featuring novel visual styles.

\paragraph{Robustness Score (RS) Formulation.}
The Robustness Score is computed from these MSE values to provide a single normalized metric for open-loop stability. For a policy trained with a mixing ratio $\lambda$, the score is defined as:
\begin{equation}
\text{RS}(\lambda) = \max\left(0, \left(1 - \frac{\overline{\text{MSE}}_{\text{OOD}}(\lambda)}{\overline{\text{MSE}}_{\text{OOD}}(0)}\right)\right) \times 100 \times \left(\frac{\overline{\text{MSE}}_{\text{ID}}(0)}{\overline{\text{MSE}}_{\text{ID}}(\lambda)}\right).
\end{equation}
Here, $\overline{\text{MSE}}_{\text{OOD}}(\lambda)$ and $\overline{\text{MSE}}_{\text{ID}}(\lambda)$ denote the average MSE over the OOD and ID test sets. The term $(1 - \frac{\overline{\text{MSE}}_{\text{OOD}}(\lambda)}{\overline{\text{MSE}}_{\text{OOD}}(0)})$ quantifies the relative improvement in OOD performance compared to the baseline policy ($\lambda=0$). The final term, $(\frac{\overline{\text{MSE}}_{\text{ID}}(0)}{\overline{\text{MSE}}_{\text{ID}}(\lambda)})$, acts as a penalty factor if the policy's ID performance degrades relative to the baseline.

\paragraph{Results and Analysis.}
The detailed MSE results for this analysis are presented in Table~\ref{tab:open_loop_mse}. For ID trajectories, performance remained relatively stable across mixing ratios. For OOD trajectories, the baseline policy (100:0) exhibited high MSE (>6900). Mixing ratios of 100:100 and 100:200 reduced the OOD error to approximately 100. At the 1:3 ratio, the OOD MSE increased to over 2200. These results show that (1) data composition can improve robustness to visual shifts without degrading ID performance, and (2) the effect of the mixing ratio is non-linear, with excessive augmentation degrading performance.

\begin{table}[H]
    \centering
    \caption{
        Open-loop trajectory prediction MSE ($\times 10^{6}$) on the cloth folding task. ID-Seen/Unseen refer to evaluation on trajectories from the original visual distribution; OOD conditions use trajectories with novel visual styles. Columns represent policies trained with different mixing ratios.
    }
    \label{tab:open_loop_mse}
    \resizebox{0.9\textwidth}{!}{%
    \begin{tabular}{@{}l|cccccc@{}}
    \toprule
    Evaluation Condition / Mixing Ratio & 100:0 & 100:100 & 100:200 & 100:300 & 100:400 & 100:500 \\
    \midrule
    ID-Seen (Original)       & 47  & 119          & 166          & 103    & 216  & 227 \\
    ID-Unseen (Original)     & 363 & 598          & 504          & 631    & 528  & 735 \\
    OOD (\texttt{dusk})        & 6993         & 100 & 105          & 2547   & 162  & 171 \\
    OOD (\texttt{romantic})    & 6998         & 98  & 101          & 2286   & 141  & 183 \\
    OOD (\texttt{tangerine\_right}) & 7117         & 115          & 112 & 3122   & 206  & 236 \\
    \bottomrule
    \end{tabular}%
    }
\end{table}

\paragraph{Feature-Space Geometry.}
To analyze the mechanism behind the performance degradation, we examined the geometry of the composed datasets in feature space. We extracted frame-level features using Inception-v3 and applied PCA to project them onto their first principal component. We then fit a univariate Gaussian distribution, $\mathcal{N}(\mu, \sigma^2)$, to these 1D projections.

The results in Table~\ref{tab:pca_stats} show that the distribution's mean $\mu$ shifts with the mixing ratio. We compute the ratio $|\mu/\sigma|$ as a proxy for the Feature-Space Signal-to-Noise Ratio (SNR). For both tasks, this SNR metric reaches a minimum at the 100:300 mixing ratio, which corresponds to the point of performance degradation observed in the open-loop analysis. This correlation forms the basis of the CIFT framework, which uses SNR during the data curation phase to determine an optimal data composition.

\begin{table}[H]
\centering
\caption{Gaussian statistics along the first principal component for different data mixing ratios. The mean $\mu$ of the original data (100:0) is aligned to be non-negative for comparison.}
\label{tab:pca_stats}
\resizebox{\textwidth}{!}{%
\begin{tabular}{l|cccccc|cccccc}
\toprule
& \multicolumn{6}{c|}{Folding clothes} & \multicolumn{6}{c}{Picking up a toy} \\
\cmidrule(lr){2-7} \cmidrule(lr){8-13}
Ratio & 100:0 & 100:100 & 100:200 & 100:300 & 100:400 & 100:500 
& 100:0 & 100:100 & 100:200 & 100:300 & 100:400 & 100:500 \\
\midrule
$\mu$ & 0.79 & 1.17 & 0.85 & 0.05 & 0.30 & 0.73
& 0.98 & 0.76 & 0.26 & 0.05 & 0.25 & 0.37 \\
$\sigma$ & 5.55 & 5.39 & 5.17 & 5.18 & 5.10 & 5.04 
& 3.33 & 3.84 & 3.89 & 3.84 & 3.94 & 3.78 \\
$|\mu/\sigma|$ & 0.1423 & 0.2171 & 0.1644 & 0.0097 & 0.0588 & 0.1448 
& 0.2943 & 0.1979 & 0.0668 & 0.0130 & 0.0635 & 0.0979 \\
\bottomrule
\end{tabular}%
}
\end{table}

\subsection{Validation of SNR Metric Across Feature Backbones}
\label{subsec:backbone_selection}

\paragraph{Experimental Design.}
To evaluate the dependence of the SNR metric on the feature extractor, we computed it using three different backbones: Inception-v3~\citep{szegedy2016rethinking} (supervised), CLIP~\citep{radford2021learning} (vision-language), and DINOv2~\citep{oquab2024dinov2} (self-supervised). For each backbone, we extracted frame-level features from datasets with varying data ratios and computed the SNR.

\paragraph{Results.}
The results in Figure~\ref{fig:backbone_comparison} show a consistent trend across all backbones. The SNR value follows a non-linear curve, reaching a minimum at approximately the 1:3 real-to-synthetic data ratio. This consistency suggests the performance degradation point is a systemic property of the data mixture. However, we observed differences in stability. CLIP showed task-dependent sensitivity. DINOv2 was sensitive to low-level noise. Inception-v3 provided a stable response across the tested tasks. Consequently, it was selected for the primary analyses in this work.

\begin{figure}[h!]
\centering
\subfigure[Dual-arm: Folding Clothes]{
    \includegraphics[width=0.48\linewidth]{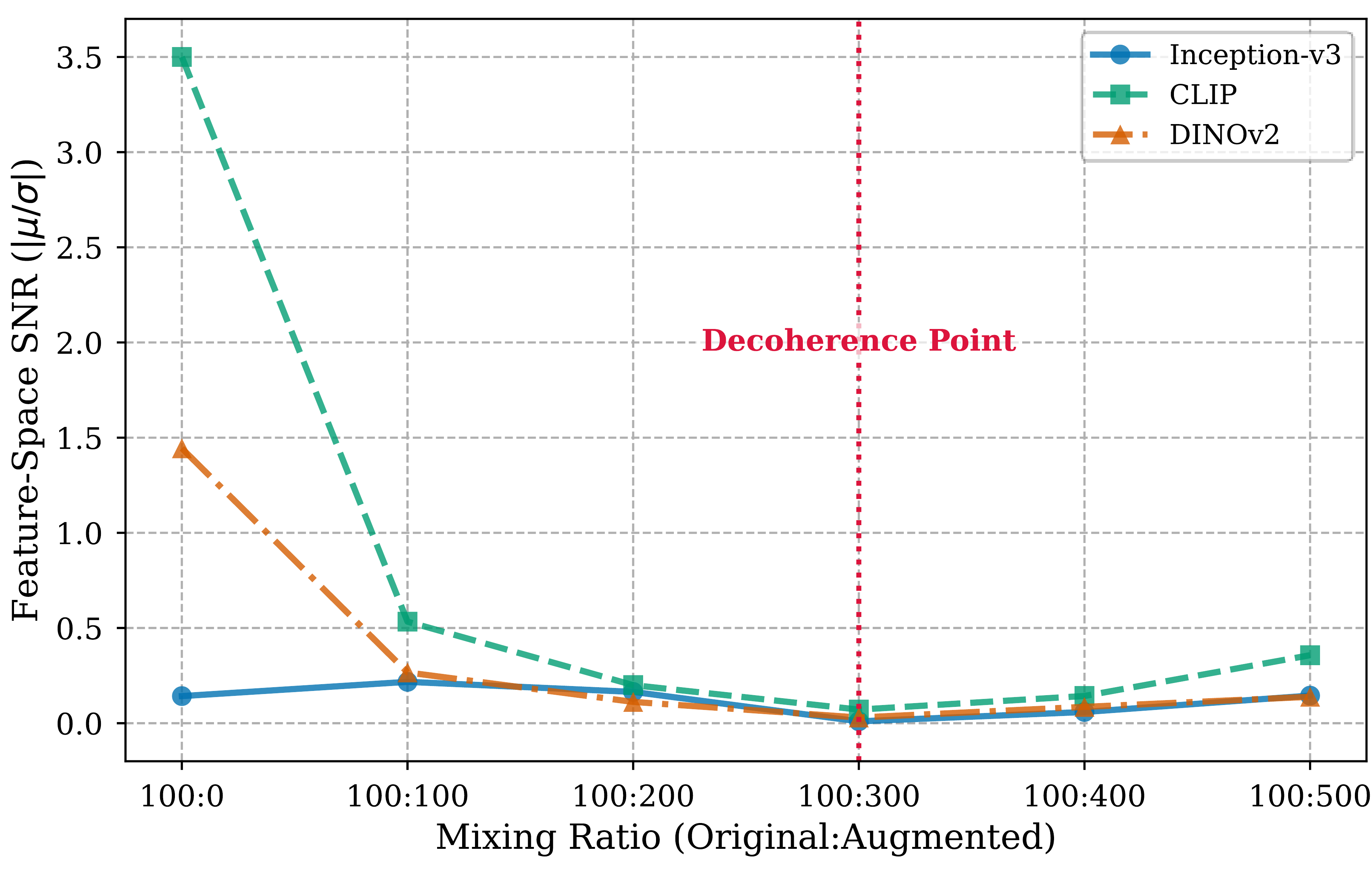}
    \label{fig:backbone_folding}
}
\hfill
\subfigure[Single-arm: Wiping a Table]{
    \includegraphics[width=0.48\linewidth]{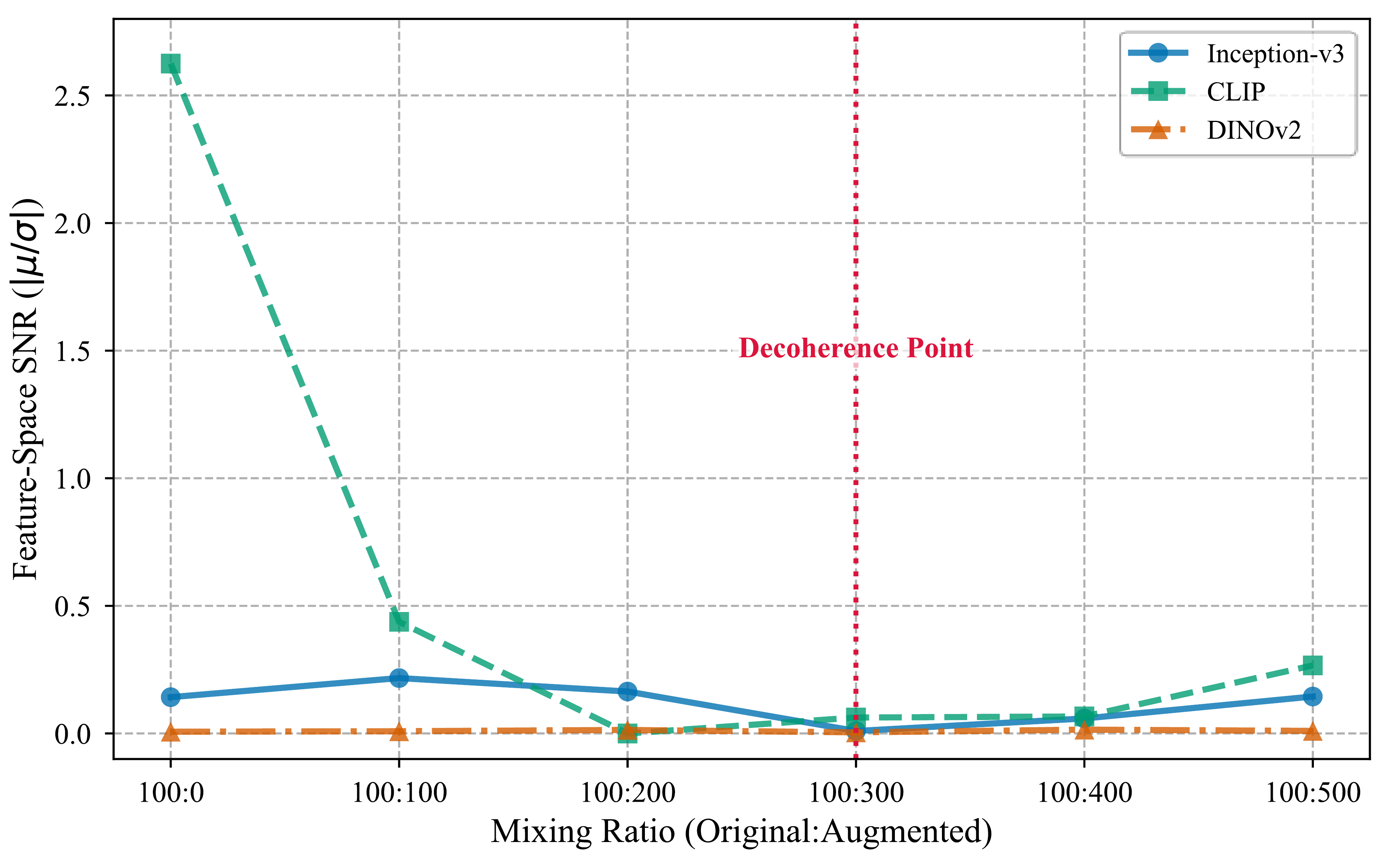}
    \label{fig:backbone_wiping}
}
\caption{Comparison of SNR curves for three feature backbones on two tasks. All backbones exhibit a U-shaped trend with a minimum near the 1:3 ratio. Inception-v3 shows the most consistent response.}
\label{fig:backbone_comparison}
\end{figure}

\subsection{Supporting Analyses for Generative Model}
\label{sec:appendix_gen_model_analysis}

\paragraph{Detailed Ablation Study.}
We provide a component-wise analysis of our ablation studies (Table~\ref{tab:mvaug_ablations}). Removing periodic cross-view attention (Single-View Agg) lowers the MVDC score, indicating that multi-view context is important for geometric coherence. Replacing dynamic Canny edge guidance~\citep{canny2009computational} with random noise increases FVD by approximately 400\%. Using static Canny edges from the first video chunk results in high FVD, showing the necessity of dynamic structural guidance. Replacing our backbone with Qwen-Image-Edit~\citep{wu2025qwen} results in a general decline in generative fidelity, validating the choice of FLUX.1-Kontext-dev~\citep{labs2025flux}.

\begin{table}[H]
\centering
\caption{Ablation study on video generation quality. All metrics are averaged across the three views. $\downarrow$ indicates lower is better, and $\uparrow$ indicates higher is better.}
\label{tab:mvaug_ablations}
\resizebox{\textwidth}{!}{%
\begin{tabular}{@{}lccccccc@{}}
\toprule
Model / Setting & FVD $\downarrow$ & FID $\downarrow$ & CVFC $\uparrow$ & MVDC $\uparrow$ & Ewarp $\times 10^{-3}$ $\downarrow$ & T-LPIPS $\times 10^{-3}$ $\downarrow$ & TCJ $\times 10^{-3}$ $\downarrow$ \\
\midrule
Ours (Full Model) & 545.7 $\pm$ 22.1 & 104.6 $\pm$ 2.4 & 0.8023 & 0.6318 & 3.7 $\pm$ 1.3 & 10.1 $\pm$ 6.1 & 0.218 \\
\midrule
Ablations on Model Design & & & & & & & \\
\addlinespace
Single-View Agg & 609.1 $\pm$ 106.7 & 112.3 $\pm$ 9.6 & 0.7915 & 0.5863 & 4.4 $\pm$ 1.3 & 13.3 $\pm$ 8.2 & 0.436 \\
Canny to Random Noise & 2714.2 $\pm$ 323.3 & 483.1 $\pm$ 36.1 & 0.9321 & 0.5592 & 19.2 $\pm$ 0.45 & 174.1 $\pm$ 22.0 & 0.699 \\
Canny to Fixed First Chunk & 836.7 $\pm$ 105.1 & 159.1 $\pm$ 19.8 & 0.7938 & 0.5936 & 3.6 $\pm$ 1.0 & 8.63 $\pm$ 4.50 & 0.411 \\
Backbone to Qwen-Image-Edit & 1400.4 $\pm$ 148.2 & 355.6 $\pm$ 35.5 & 0.8244 & 0.6103 & 4.8 $\pm$ 1.3 & 17.3 $\pm$ 10.6 & 0.256 \\
\midrule
Ablations on Inference Strategy & & & & & & & \\
\addlinespace
Unit-based Relighting & 847.9 $\pm$ 190.0 & 177.1 $\pm$ 10.6 & 0.7678 & 0.6147 & 5.32 $\pm$ 1.18 & 18.6 $\pm$ 10.8 & 0.751 \\
\bottomrule
\end{tabular}%
}
\end{table}

\paragraph{Discussion of Quantitative Generative Metrics.}
The CVFC score for our model is lower than that of RoboTransfer. We hypothesize this is related to RoboTransfer's synthesis strategy, which separates the object from a static background. This approach can increase feature similarity across views due to the near-identical backgrounds, but may produce unrealistic object contours. Metrics such as FVD and FID, which evaluate the entire image distribution, show more favorable results for our method.

\paragraph{Human Evaluation.}
We conducted a user study to evaluate perceptual quality. 20 participants viewed 30 video pairs in a blind, randomized trial, with each pair containing a video from our method and one from a baseline. Participants rated each video on a 5-point Likert scale across four criteria and selected an overall preferred video. The results (Table~\ref{tab:human_eval_appendix}) show a user preference for our method. Results were found to be statistically significant (p < 0.01) via a two-tailed paired t-test.

\begin{table}[H]
\centering
\begin{tabular}{l c c c}
\toprule
Criterion & Ours & RoboTransfer & Preference for Ours (\%) \\
\midrule
Quality & 4.5 $\pm$ 0.6 & 3.2 $\pm$ 1.0 & 89.5\% \\
Smoothness & 4.3 $\pm$ 0.7 & 2.8 $\pm$ 1.1 & 91.3\% \\
Consistency & 4.5 $\pm$ 0.5 & 2.9 $\pm$ 1.1 & 92.1\% \\
Fidelity & 4.6 $\pm$ 0.4 & 3.7 $\pm$ 0.9 & 88.3\% \\
\midrule
Overall Preference & \multicolumn{2}{c}{} & 90.3\% \\
\bottomrule
\end{tabular}
\caption{Human evaluation results comparing our method to RoboTransfer. Scores are mean $\pm$ SD on a 1-5 Likert scale.}
\label{tab:human_eval_appendix}
\end{table}

\begin{table}[H]
\centering
\resizebox{\textwidth}{!}{%
\begin{tabular}{@{}llcc@{}}
\toprule
Method & Task (384x512 pixels) & Inference Time & VRAM Utilization (\%) \\
\midrule
\multicolumn{4}{l}{\textit{First-Frame Generation}} \\
\addlinespace
FLUX.1-Kontext-dev (Our Base) & First Frame Synthesis & ~3 min & ~97 \\
Qwen-Image-Edit & First Frame Synthesis & 25 min 33 sec & ~97 \\
\midrule
\multicolumn{4}{l}{\textit{Video-to-Video Inference}} \\
\addlinespace
RoboTransfer & 2129 frames @ 30 FPS & 100 min & 88.5 \\
RoboEngine & 300 frames @ 30 FPS & ~20 min & ~95 \\
\textbf{MVAug (Ours)} & \textbf{2129 frames @ 30 FPS} & \textbf{~20 min} & \textbf{97.9} \\
\bottomrule
\end{tabular}
}
\caption{Inference performance for 384x512 video generation on a single NVIDIA RTX 4090 GPU.}
\label{tab:inference_performance}
\end{table}

\subsection{Computational Performance}
\label{sec:appendix_performance}

\paragraph{Inference Performance and Resource Utilization.}
To provide a transparent overview of the computational requirements, we benchmarked our method and related baselines on a single NVIDIA RTX 4090 GPU (24GB VRAM), with all video inference conducted at a resolution of 384x512 pixels. The results, detailed in Table~\ref{tab:inference_performance}, highlight the practical efficiency of our approach, particularly in memory-constrained scenarios at this resolution.

For first-frame generation, our FLUX.1-Kontext-dev base model~\citep{labs2025flux} is highly efficient, requiring ~3 minutes, substantially faster than the ~25 minutes needed by Qwen-Image-Edit~\citep{wu2025qwen}. In the video-to-video synthesis comparison, the hardware limitations of baselines become apparent. RoboTransfer, for example, is memory-intensive and encounters out-of-memory errors when attempting to generate long video sequences at 384x512 resolution on this GPU. We therefore benchmarked it on a 2129-frame sequence that runs within the 24GB VRAM limit, a task which took 100 minutes. In contrast, our MVAug pipeline completed the identical task in approximately 20 minutes—a five-fold speedup—while maintaining stable, high VRAM utilization. While our inference time is comparable to RoboEngine's, our method generated over seven times more frames in that period (2129 vs. 300), indicating significantly higher throughput.

\begin{figure}[H]
    \centering
    \includegraphics[width=\linewidth]{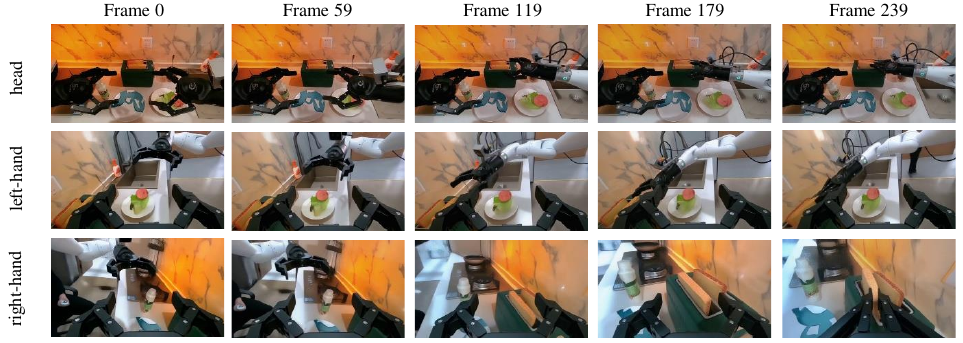}
    \caption{
        MVAug synthesis example 1. Sampled frames from the three generated camera views, conditioned on the textual prompt ``Relight with vibrant tangerine glow emanating from the left side''.
    }
    \label{fig:mvaug_synthesis_example_1}
\end{figure}

\begin{figure}[H]
    \centering
    \includegraphics[width=\linewidth]{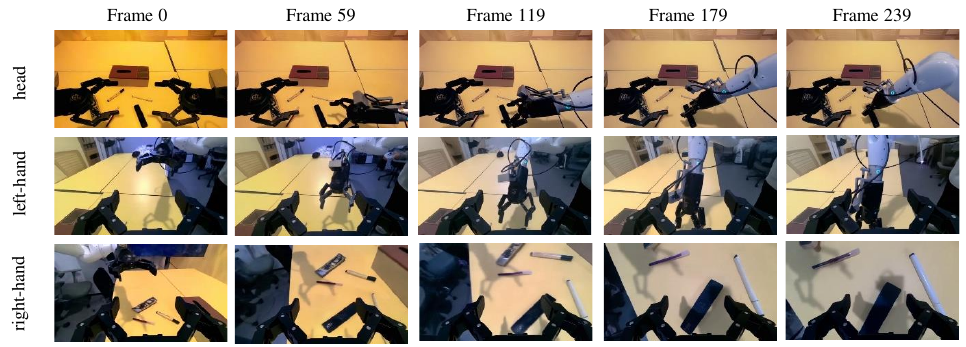}
    \caption{
        MVAug synthesis example 2. Sampled frames from the three generated camera views, conditioned on the textual prompt ``Transform the lighting to include blazing yellow stage-like lighting from above''.
    }
    \label{fig:mvaug_synthesis_example_2}
\end{figure}

\subsection{Qualitative Analysis of the MVAug Engine}
\label{sec:synthesis_visuals}

This section visualizes the capabilities of our MVAug synthesis engine, which forms the foundation of the CIFT framework. We first showcase its ability to generate high-fidelity and diverse data augmentations, which are critical for exploring the data composition space (Figure~\ref{fig:mvaug_synthesis_example_1}, \ref{fig:mvaug_synthesis_example_2},\ref{fig:mvaug_synthesis_example_3},\ref{fig:mvaug_synthesis_example_4},\ref{fig:mvaug_synthesis_example_5},\ref{fig:mvaug_synthesis_example_6},\ref{fig:mvaug_synthesis_example_7},\ref{fig:mvaug_synthesis_example_8},\ref{fig:mvaug_synthesis_example_9},\ref{fig:mvaug_synthesis_example_10},\ref{fig:mvaug_synthesis_example_11},\ref{fig:mvaug_synthesis_example_12},\ref{fig:mvaug_synthesis_example_13},\ref{fig:mvaug_synthesis_example_14}). Following this, we present a visual ablation study of the generative model to provide insight into our key design choices and their impact on synthesis quality (Figure~\ref{fig:ablation_visual}).

\begin{figure}[H]
    \centering
    \includegraphics[width=\linewidth]{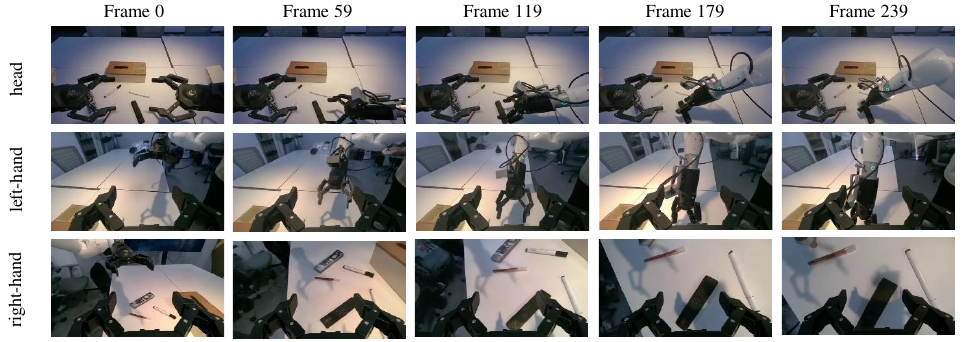}
    \caption{
        MVAug synthesis example 3. Sampled frames from the three generated camera views, conditioned on the textual prompt ``Spotlight effect, soft dusk lighting, warm yellow glow, centered illumination''.
    }
    \label{fig:mvaug_synthesis_example_3}
\end{figure}

\begin{figure}[H]
    \centering
    \includegraphics[width=\linewidth]{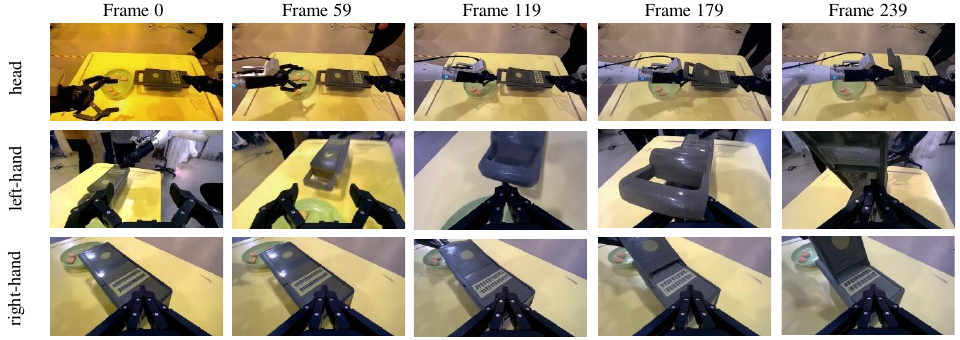}
    \caption{
        MVAug synthesis example 4. Sampled frames from the three generated camera views, conditioned on the textual prompt ``Transform the lighting to include blazing yellow stage-like lighting from above''.
    }
    \label{fig:mvaug_synthesis_example_4}
\end{figure}

\begin{figure}[H]
    \centering
    \includegraphics[width=\linewidth]{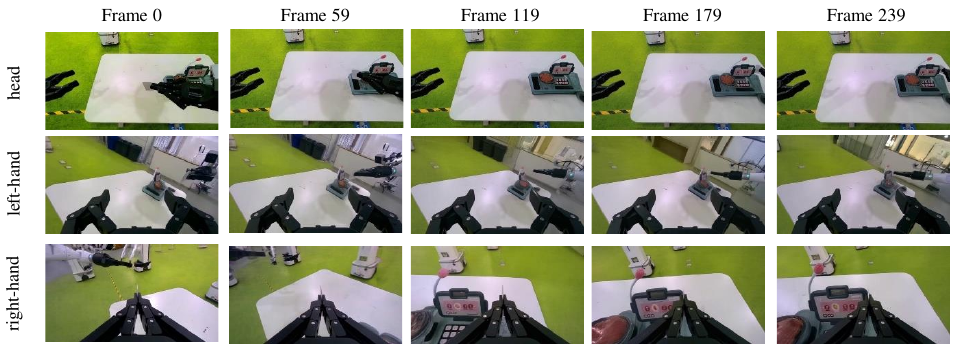}
    \caption{
        MVAug synthesis example 5. Sampled frames from the three generated camera views, conditioned on the textual prompt ``Replace the background with green grass''.
    }
    \label{fig:mvaug_synthesis_example_5}
\end{figure}

\begin{figure}[H]
    \centering
    \includegraphics[width=\linewidth]{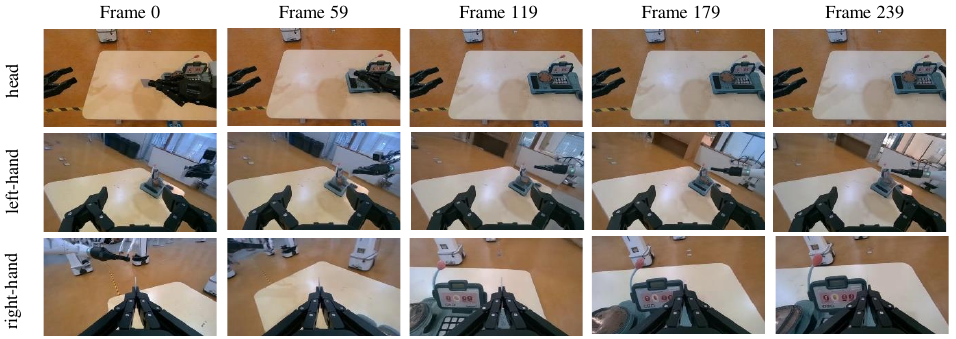}
    \caption{
        MVAug synthesis example 6. Sampled frames from the three generated camera views, conditioned on the textual prompt ``Replace the background with brown floor''.
    }
    \label{fig:mvaug_synthesis_example_6}
\end{figure}

\begin{figure}[H]
    \centering
    \includegraphics[width=\linewidth]{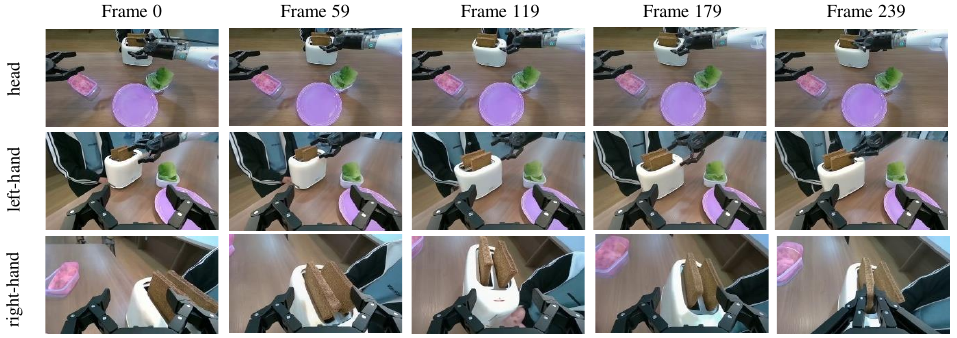}
    \caption{
        MVAug synthesis example 7. Sampled frames from the three generated camera views, conditioned on the textual prompt ``Recolor the plate to a soft pink-blue shade''.
    }
    \label{fig:mvaug_synthesis_example_7}
\end{figure}

\begin{figure}[H]
    \centering
    \includegraphics[width=\linewidth]{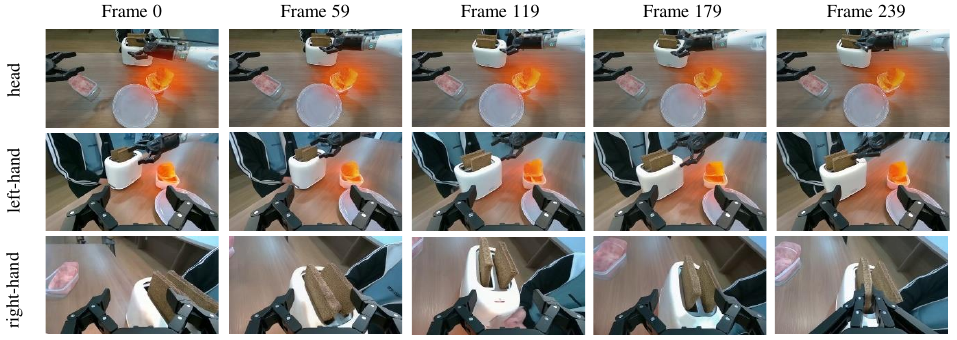}
    \caption{
        MVAug synthesis example 8. Sampled frames from the three generated camera views, conditioned on the textual prompt ``Add warm lighting to the vegetables in the scene''.
    }
    \label{fig:mvaug_synthesis_example_8}
\end{figure}

\begin{figure}[H]
    \centering
    \includegraphics[width=\linewidth]{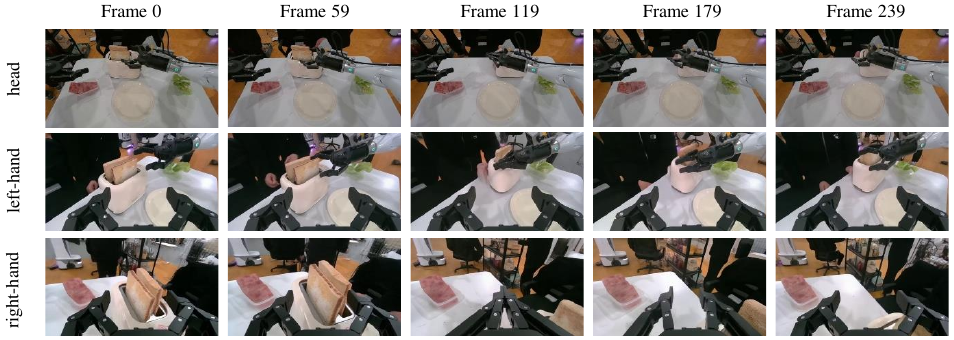}
    \caption{
        MVAug synthesis example 9. Sampled frames from the three generated camera views, conditioned on the textual prompt ``Replace the background with brown floor''.
    }
    \label{fig:mvaug_synthesis_example_9}
\end{figure}

\begin{figure}[H]
    \centering
    \includegraphics[width=\linewidth]{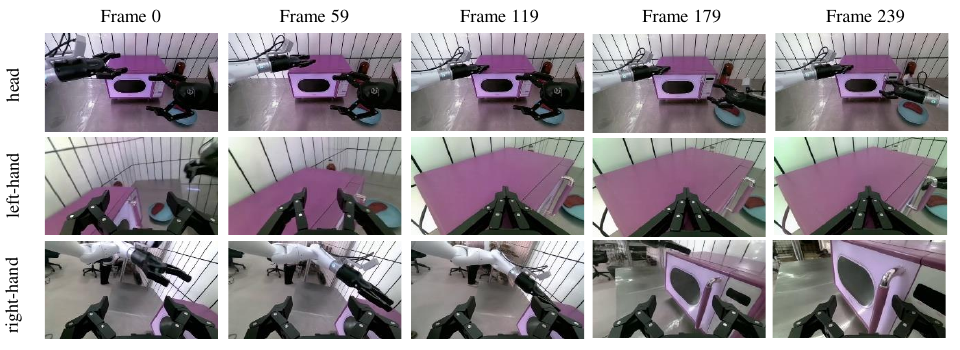}
    \caption{
        MVAug synthesis example 10. Sampled frames from the three generated camera views, conditioned on the textual prompt ``Apply a purple finish to the oven''.
    }
    \label{fig:mvaug_synthesis_example_10}
\end{figure}

\begin{figure}[H]
    \centering
    \includegraphics[width=\linewidth]{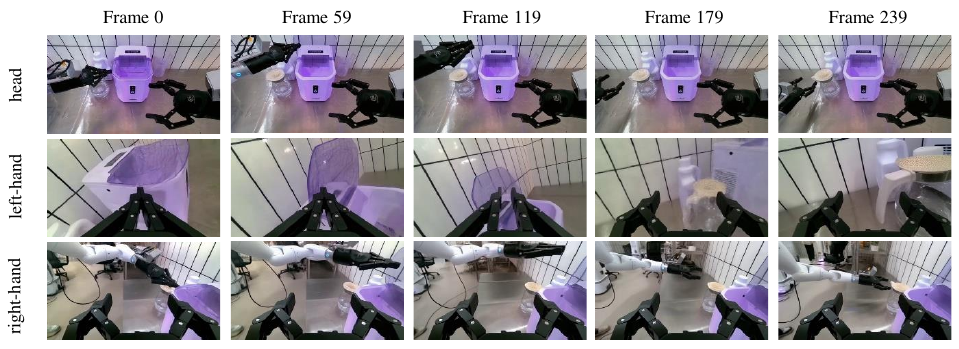}
    \caption{
        MVAug synthesis example 11. Sampled frames from the three generated camera views, conditioned on the textual prompt ``Change the lid of the ice maker to purple''.
    }
    \label{fig:mvaug_synthesis_example_11}
\end{figure}

\begin{figure}[H]
    \centering
    \includegraphics[width=\linewidth]{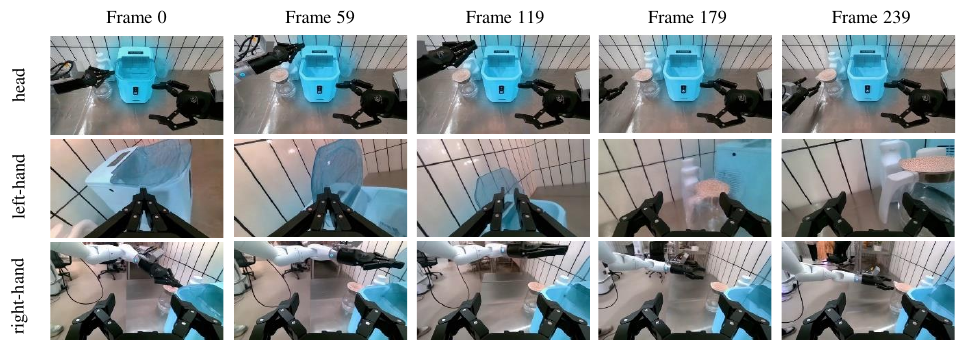}
    \caption{
        MVAug synthesis example 12. Sampled frames from the three generated camera views, conditioned on the textual prompt ``Recolor the lid to a cyan tone''.
    }
    \label{fig:mvaug_synthesis_example_12}
\end{figure}

\begin{figure}[H]
    \centering
    \includegraphics[width=\linewidth]{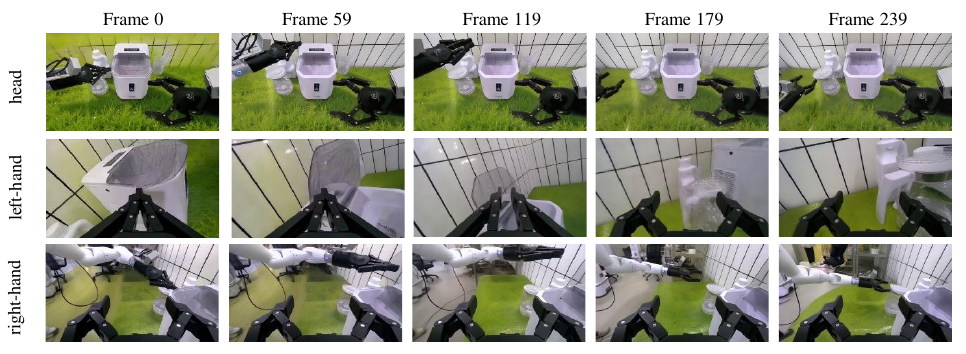}
    \caption{
        MVAug synthesis example 13. Sampled frames from the three generated camera views, conditioned on the textual prompt ``Replace the background with green grass''.
    }
    \label{fig:mvaug_synthesis_example_13}
\end{figure}

\begin{figure}[H]
    \centering
    \includegraphics[width=\linewidth]{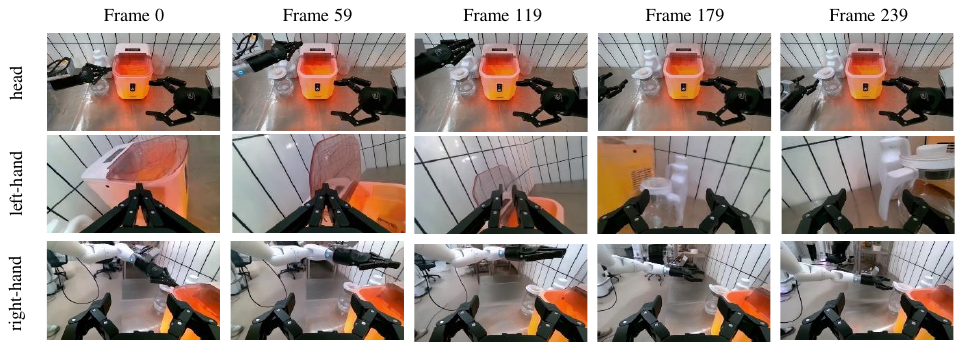}
    \caption{
        MVAug synthesis example 14. Sampled frames from the three generated camera views, conditioned on the textual prompt ``Add a warm orange-yellow glow inside the ice maker''.
    }
    \label{fig:mvaug_synthesis_example_14}
\end{figure}

\begin{figure}[h!]
\centering
\subfigure[Ours (Full Model)]{
    \includegraphics[width=0.9\linewidth]{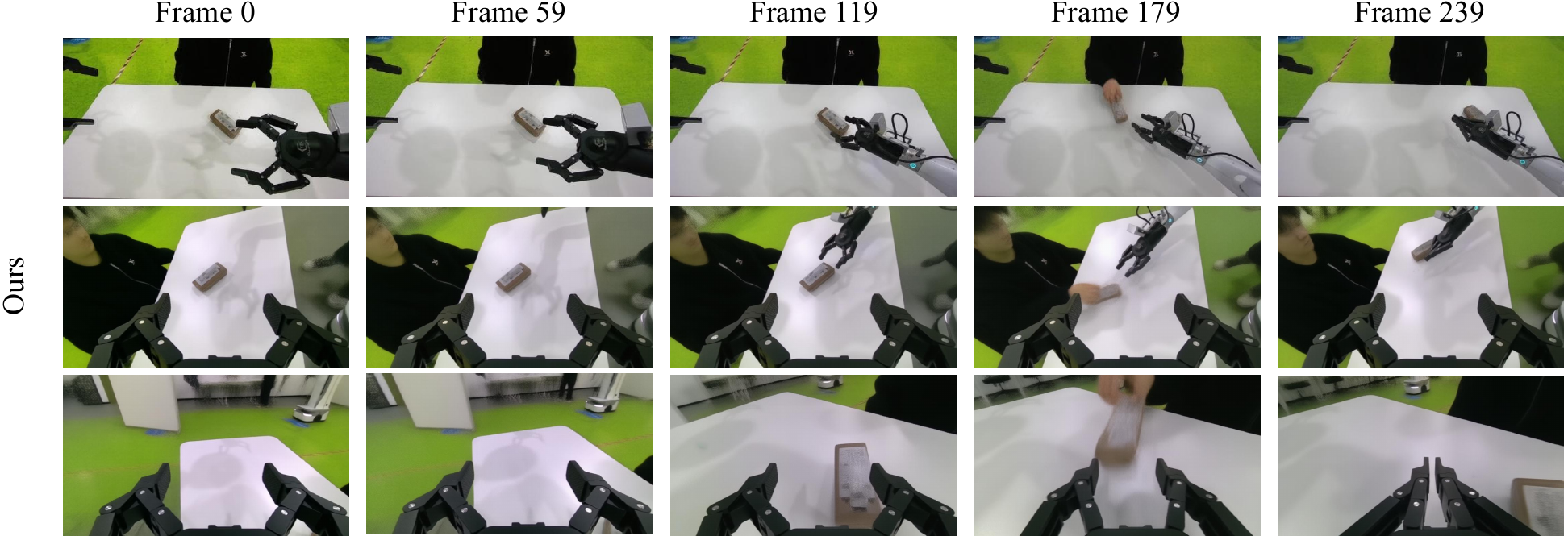}
}
\vspace{1em}
\subfigure[Single-View Agg]{
    \includegraphics[width=0.9\linewidth]{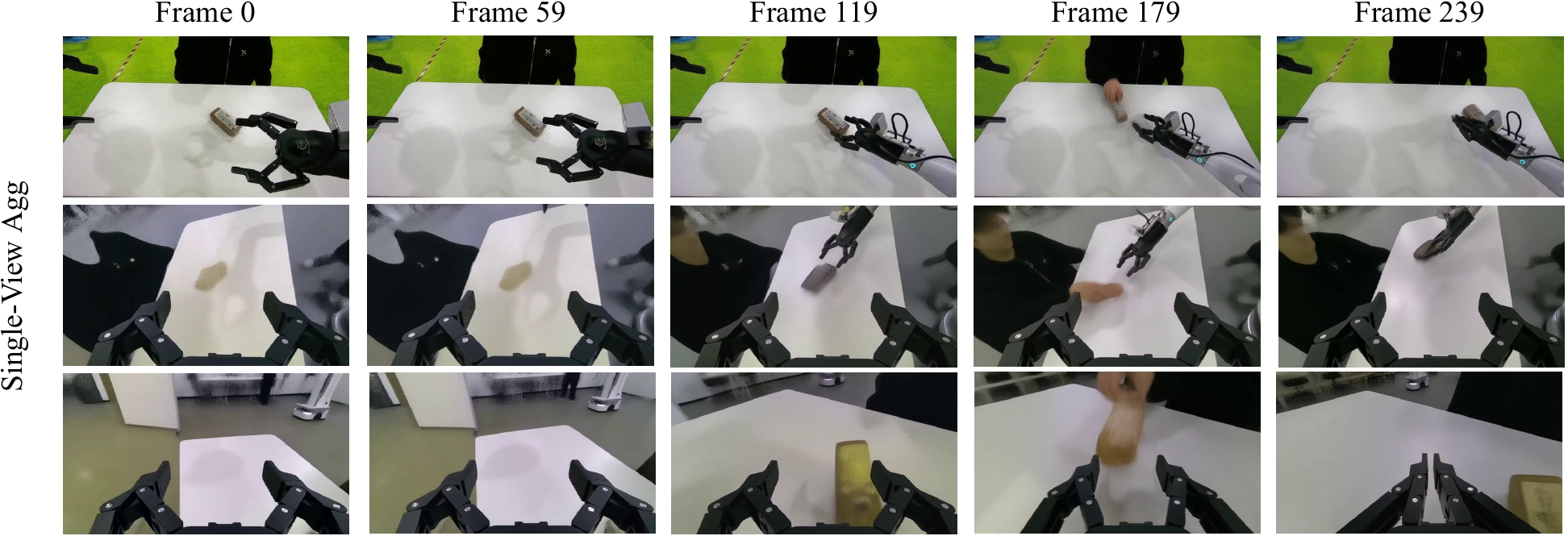}
}
\vspace{1em}
\subfigure[Canny $\rightarrow$ Fixed First Chunk]{
    \includegraphics[width=0.9\linewidth]{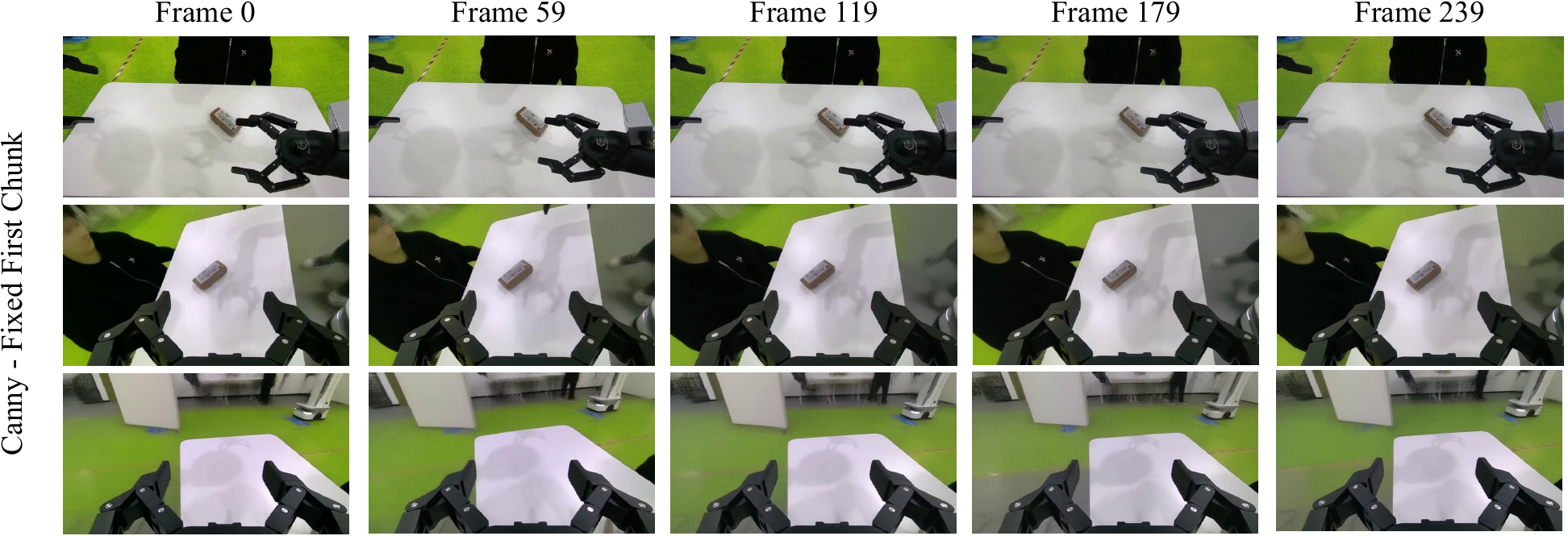}
}
\vspace{1em}
\subfigure[Backbone $\rightarrow$ Qwen-Image-Edit]{
    \includegraphics[width=0.9\linewidth]{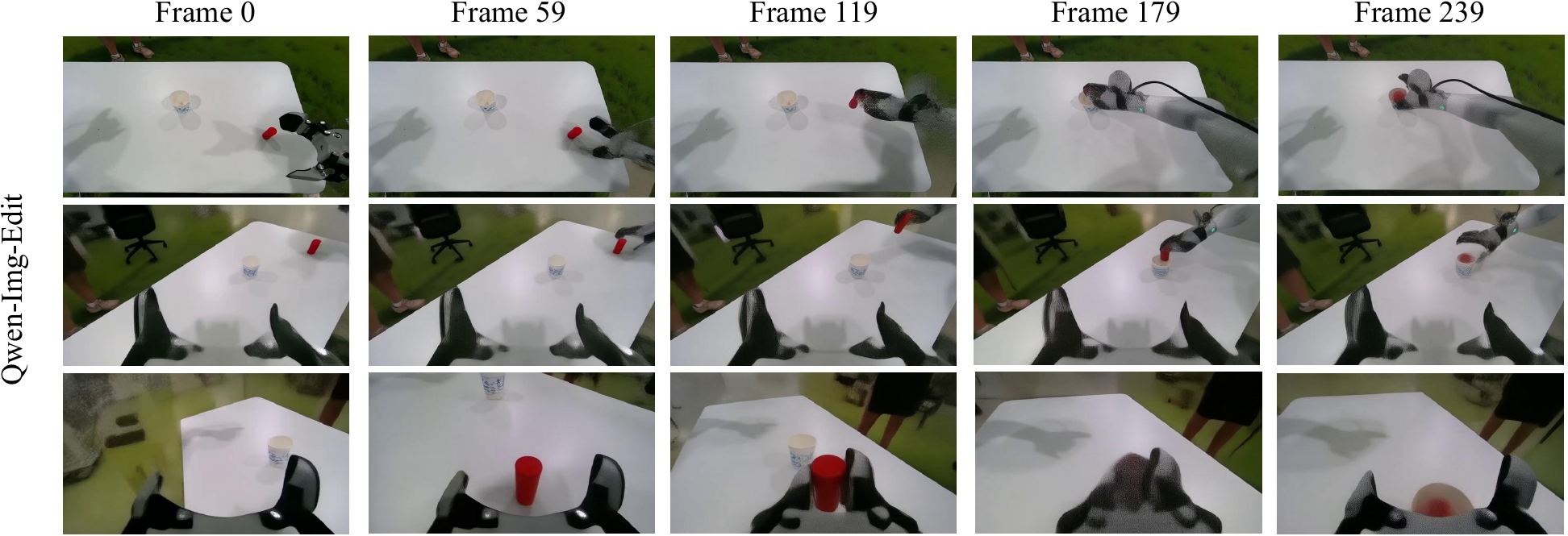}
}
\caption{Qualitative results of the ablation study. These visuals confirm the quantitative findings in Table~\ref{tab:mvaug_ablations}, showing degradations such as loss of consistency or structure in ablated models.}
\label{fig:ablation_visual}
\end{figure}

\clearpage

\subsection{Real-World Policy Evaluation}
\label{sec:on_robot_visuals}

Building on the generative capabilities shown above, this section demonstrates the real-world effectiveness and out-of-distribution (OOD) robustness of policies trained with the CIFT framework. We begin by illustrating a direct comparison between our policy and the baseline in a challenging semantic OOD scenario (Figure~\ref{fig:semantic_challenge}).

To analyze the policy's robustness in greater detail, we focus on the complex dual-arm cloth folding task. For this on-robot evaluation, we systematically introduced a variety of OOD conditions, including changes in the cloth's color and size, different surface textures and lighting conditions, and alternate starting orientations. Figures~\ref{fig:on_robot_fold_the_cloth_1}--\ref{fig:on_robot_fold_the_cloth_3} visualize several successful on-robot executions under these diverse conditions, showcasing the policy's broad generalization capabilities.

In addition to these successes, we also present a failure case to transparently analyze the boundaries of the system's robustness (Figure~\ref{fig:on_robot_fold_the_cloth_4}). We hypothesize that this failure is not primarily due to a limitation in the learned policy itself, but rather to hardware-induced stochasticity. During on-robot inference, we observed occasional jitter and instability in the grippers, a form of real-world noise that can disrupt the execution of long-horizon, high-precision tasks like cloth folding. This analysis highlights a critical challenge in real-world robotics and defines a boundary condition for the policy's performance, where physical hardware stability becomes a limiting factor.

\begin{figure}[h!]
    \centering
    \includegraphics[width=\linewidth]{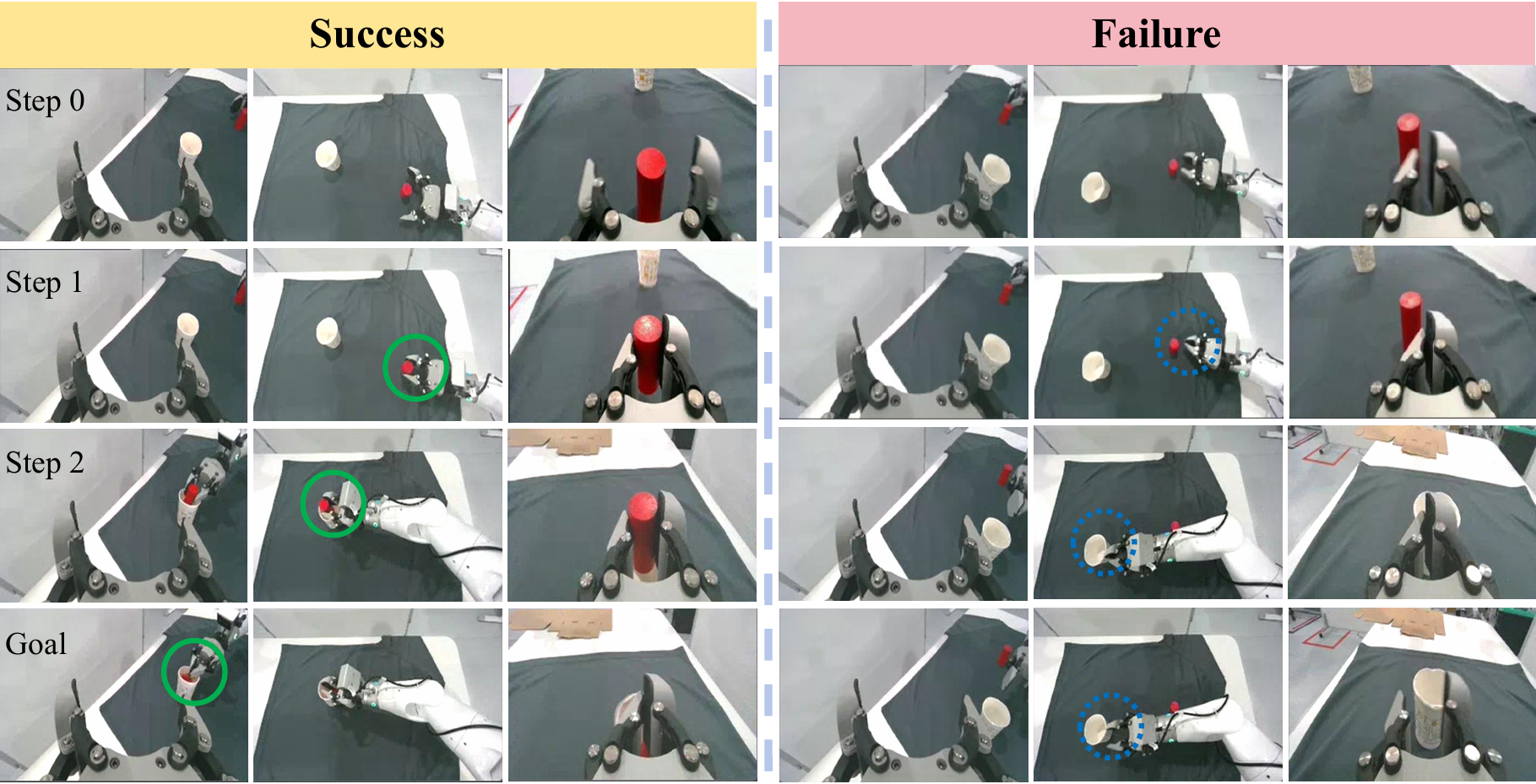}
    \caption{
        Qualitative on-robot comparison. The CIFT-trained policy (bottom) succeeds despite a significant change in surface appearance, a challenging OOD scenario where the baseline policy (top) fails.
    }
    \label{fig:semantic_challenge}
\end{figure}

\begin{figure}[h!]
    \centering
    \includegraphics[width=\linewidth]{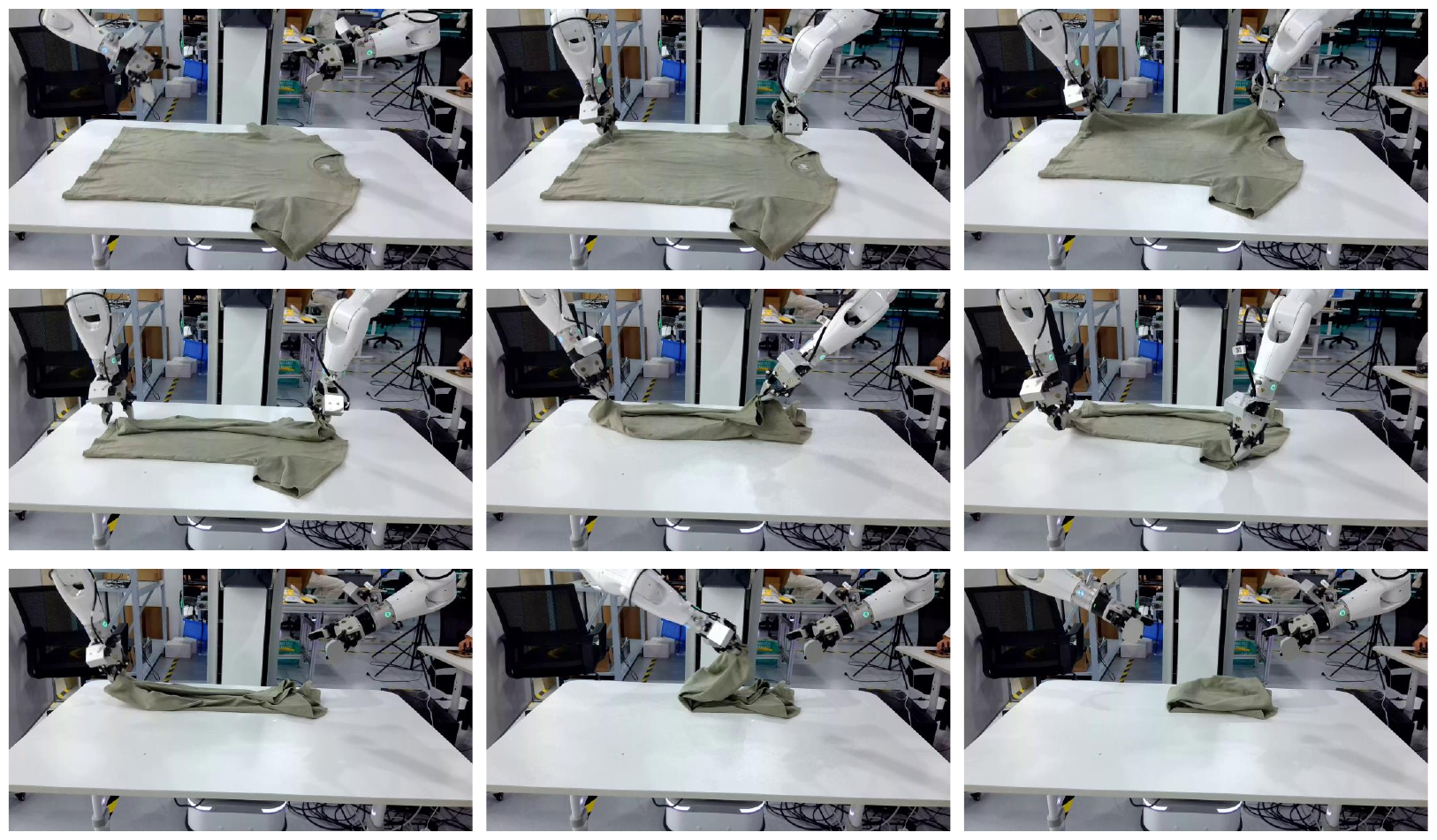}
    \caption{
        \textbf{Example A: Successful Execution in Low-Light Conditions.} Key stages from a successful on-robot trial demonstrating robustness to challenging lighting. \textbf{Conditions:} Unlit environment, white table surface, and a front-oriented, dark green cloth (size 160). \textbf{Outcome:} The policy successfully completed the folding task.
    }
    \label{fig:on_robot_fold_the_cloth_1}
\end{figure}

\begin{figure}[h!]
    \centering
    \includegraphics[width=\linewidth]{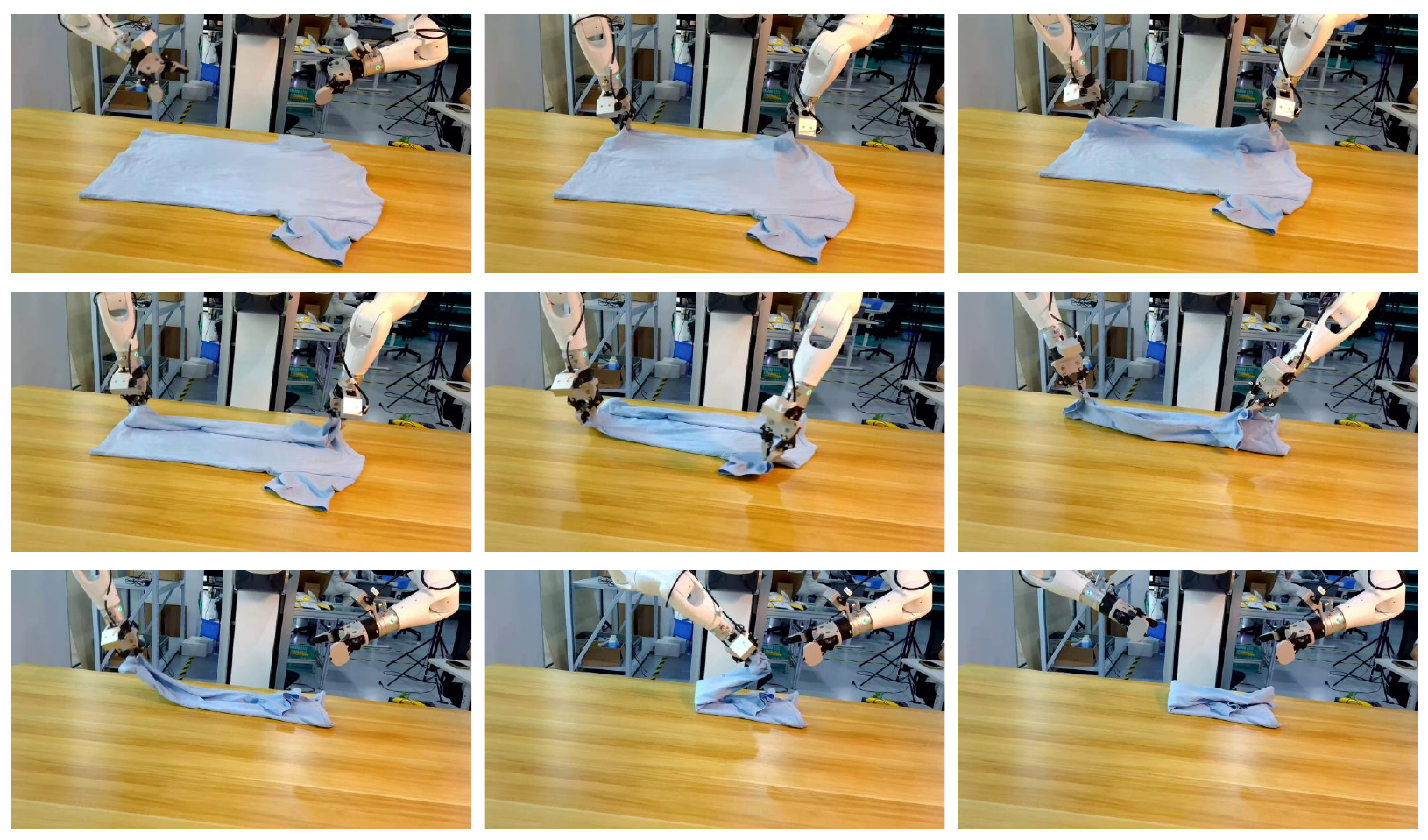}
    \caption{
        \textbf{Example B: Robustness to Novel Texture and Task Orientation.} A successful execution under a different set of OOD variations, highlighting generalization. \textbf{Conditions:} Direct lighting, textured wooden surface, and a back-oriented, blue cloth (size 160). \textbf{Outcome:} The policy successfully completed the folding task.
    }
    \label{fig:on_robot_fold_the_cloth_2}
\end{figure}

\begin{figure}[h!]
    \centering
    \includegraphics[width=\linewidth]{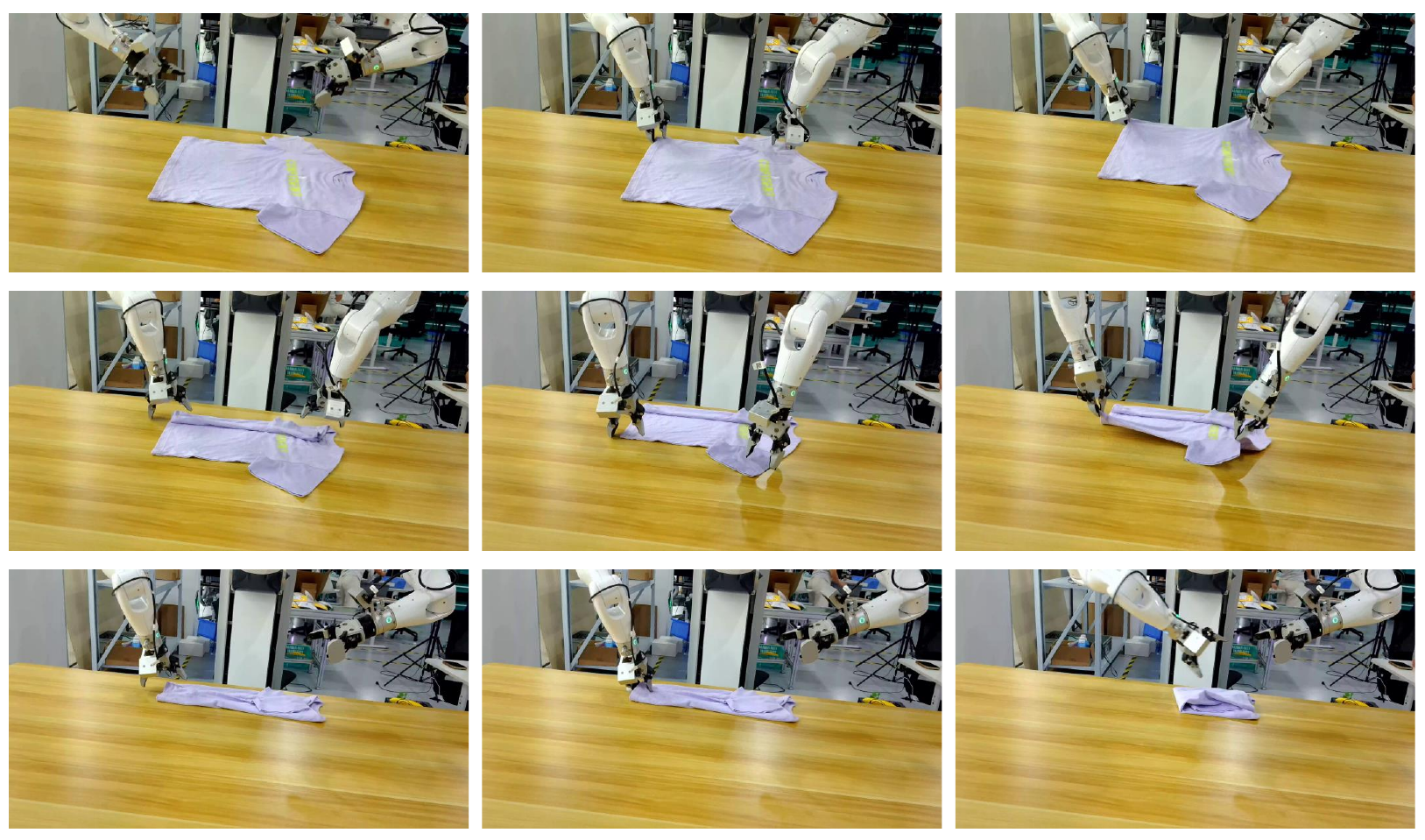}
    \caption{
        \textbf{Example C: Generalization to a Novel Object Size.} Another successful trajectory, showcasing the policy's ability to adapt to different object properties. \textbf{Conditions:} Unlit environment, wooden table surface, and a front-oriented, purple cloth (size 120). \textbf{Outcome:} The policy successfully completed the folding task.
    }
    \label{fig:on_robot_fold_the_cloth_3}
\end{figure}

\begin{figure}[h!]
    \centering
    \includegraphics[width=\linewidth]{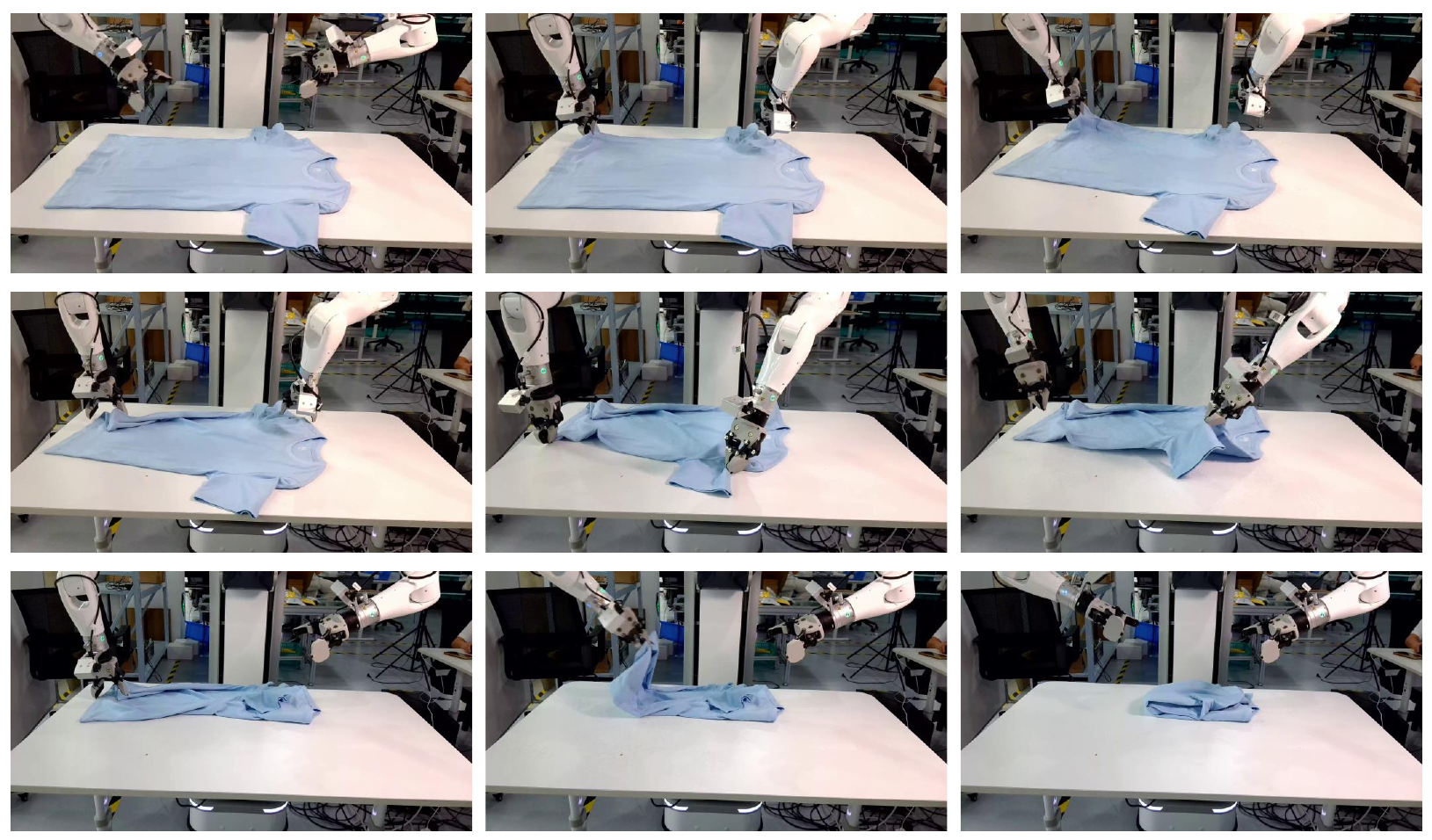}
    \caption{
        \textbf{Example D: A Challenging Scenario Resulting in Failure.} A trial illustrating a limitation of the current policy under a specific combination of factors. \textbf{Conditions:} Direct lighting, white table surface, and a front-oriented, blue cloth (size 160). \textbf{Outcome:} The policy failed to complete the task, highlighting a boundary condition for its robustness.
    }
    \label{fig:on_robot_fold_the_cloth_4}
\end{figure}

\end{document}